\documentclass{article}
\usepackage{iclr2027_conference,times}
\usepackage{amsmath,amssymb,amsthm,mathtools,mathrsfs}
\usepackage{graphicx}
\usepackage{booktabs}
\newcommand{\C}{\mathbb{C}}
\newcommand{\E}{\mathbb{E}}
\newcommand{\R}{\mathbb{R}}
\newcommand{\calL}{\mathcal{L}}

\newcommand{\iter}{\mathcal{I}}
\newcommand{\calB}{\mathcal{B}}
\newcommand{\flow}{\Phi}
\newcommand{\target}{\mathcal{Y}}
\newcommand{\Normal}{\mathcal{N}}
\newcommand{\Fourier}{\mathscr{F}}
\newcommand{\U}{\operatorname{U}}
\newcommand{\sg}{\operatorname{sg}}
\newcommand{\supp}{\operatorname{supp}}
\renewcommand{\Re}{\operatorname{Re}}
\newcommand{\dd}{\mathrm{d}}
\newcommand\norm[2][1]{\left\|#2\right\|_{#1}}
\newcommand\abs[1]{\left|#1\right|}
\newcommand\where[2]{\left.#1\right|_{#2}}

\usepackage{hyperref}
\usepackage{thmtools,thm-restate}
\usepackage{url}
\usepackage[ruled,lined]{algorithm2e}

\theoremstyle{plain}
\newtheorem{theorem}{Theorem}[section]

\newtheorem{proposition}[theorem]{Proposition}

\def\theHtheorem{\thesection.\arabic{theorem}}

\hypersetup{colorlinks=true,linkcolor=blue,citecolor=blue,urlcolor=blue,
    pdftitle={Score-Calibrated Flow},pdfauthor={Zeyang Li, Yunan Wang, Risheek Garrepalli, Mohammad Ghavamzadeh, Navid Azizan}}

\title{Score-Calibrated Flow for Sampling from Unnormalized Densities with Applications to Generative Online Reinforcement Learning}
\author{Zeyang Li\textsuperscript{1}, Yunan Wang\textsuperscript{1}, Risheek Garrepalli\textsuperscript{2}, Mohammad Ghavamzadeh\textsuperscript{2}, Navid Azizan\textsuperscript{1} \\
\textsuperscript{1}Massachusetts Institute of Technology \quad \textsuperscript{2}Qualcomm AI Research}
\iclrfinalcopy

\makeatletter
\newcommand{\restatethmlike}[2]{%
    \begingroup
    \renewcommand{\thetheorem}{\csname the#1\endcsname}%
    \let\c@theorem\c@thmt@dummyctr
    \let\theHtheorem\theHthmt@dummyctr
    #2*%
    \endgroup
}
\newcommand{\titlenote}[1]{%
    \begingroup
    \renewcommand{\thefootnote}{}%
    \long\def\@makefntext##1{\noindent##1}%
    \footnotetext{#1}%
    \endgroup
}
\makeatother
\newcommand{\restateproposition}[1]{\restatethmlike{proposition}{#1}}
\newcommand{\restatetheorem}[1]{\restatethmlike{theorem}{#1}}
\newcommand{\restatecorollary}[1]{\restatethmlike{corollary}{#1}}

\begin{document}
{\setlength{\tabcolsep}{0pt}\maketitle}
\titlenote{Correspondence to Zeyang Li (\texttt{zeyang@mit.edu}) and Yunan Wang (\texttt{wangyn25@mit.edu}).}
\ificlrfinal
  \lhead{Preprint}
\else
  \lhead{Under review}
\fi

\begin{abstract}
Diffusion and flow models provide expressive policy classes for online reinforcement learning (RL), enabling multimodal behaviors and improved performance. However, training these policies remains challenging: the critic specifies the desired policy as an unnormalized Boltzmann density but does not provide direct samples from it. Many existing methods rely on importance sampling to construct training signals, which can suffer from high variance, increasing computational cost and destabilizing training.
In this paper, we propose \emph{Score-Calibrated Flow} (SCF), a simple and efficient algorithm for training generative models to sample from unnormalized densities without importance sampling or backpropagation through the sampling trajectory. Our key idea is to learn the desired flow by enforcing self-consistency, bypassing the challenging estimation of the target posterior mean. By jointly exploiting the prescribed target score and the structure of flow matching, we establish these self-consistency requirements as \emph{score-calibrated optimality conditions}, first for the terminal density and then for the trainable velocity field. We prove that their unique solutions are, respectively, the target density and the ideal flow model that conditional flow matching (CFM) would recover if target samples were available.
We formulate the velocity condition as a fixed-point equation and exploit its conditional-expectation structure to construct a stop-gradient objective for enforcing it. The resulting training procedure retains the scalable sample--interpolate--regress structure of CFM despite the absence of target samples, using endpoints generated by the current flow. For online RL, the critic gradient supplies the target score at the generated actions, yielding a direct approach to actor training.
Experiments on RL benchmarks demonstrate that SCF matches or improves upon state-of-the-art generative-policy baselines, while substantially reducing training time.
\end{abstract}

\section{Introduction}
\label{sec:introduction}

Diffusion and flow models learn expressive distributions by transforming initial noise through stochastic or deterministic dynamics~\citep{sohl2015deep,ho2020denoising,lipman2023flow}. They are appealing as policy classes since several distinct actions may be valuable in the same state, whereas a restrictive distribution may fail to represent these alternatives. Greater policy expressiveness can also translate into improved performance. These advantages have motivated the use of generative policies in imitation learning~\citep{chi2025diffusion} and offline reinforcement learning (RL)~\citep{wang2023diffusion}. In these settings, expert demonstrations or offline datasets provide the samples needed in the standard training recipe for generative models. In online RL, however, direct samples from the target density that policy improvement asks the actor to learn are unavailable.

The difficulty is particularly clear in maximum-entropy RL~\citep{haarnoja2018soft}. At a given state $s$, soft policy improvement targets the Boltzmann policy $\pi^Q(a|s)\propto\exp(\tau Q(s,a))$, where $Q$ is the critic and $\tau>0$ is the inverse temperature. Soft actor-critic (SAC) approximates this target within a Gaussian policy family using a mode-seeking objective. A diffusion or flow policy offers a richer representation, but standard training methods such as conditional flow matching (CFM) require samples from the Boltzmann policy, whereas the critic provides only evaluations of $Q(s,a)$ and its action gradient $\nabla_a Q(s,a)$. More broadly, the challenge is \emph{learning to sample from unnormalized densities}, rather than learning from target samples as in standard generative training.

Existing methods for generative online RL can be grouped into four categories (see Appendix~\ref{app:related_work} for a detailed discussion). \emph{Direct backpropagation}~\citep{wang2024diffusion,lv2025flow,celik2025dime} optimizes critic-based objectives by differentiating through the entire sampling trajectory, which incurs substantial computational cost and destabilizes training. \emph{Value-guided sampling}~\citep{psenka2024learning,wei2026scalable} uses critic information to guide sampling and exploration through stochastic or deterministic dynamics. \emph{Weighted regression}~\citep{ding2024diffusion,fan2025online,ma2025efficient} trains the policy on candidate actions using value-dependent weights, adding sampling overhead and making the training target depend on the proposal distribution and weighting rule. These approaches rely on surrogate objectives or sampling approximations that can introduce bias relative to exact soft policy improvement.
In contrast, \emph{Boltzmann targeting}~\citep{dong2025maximum,ma2025efficient,jain2025sampling,li2026reverse} derives training signals directly from the Boltzmann density, recovering the desired policy at the ideal population optimum. In practice, however, the required posterior means are typically estimated through multi-particle importance sampling.
This incurs computational overhead at each update and can produce high-variance estimates that destabilize training.

In this paper, we propose \emph{Score-Calibrated Flow} (SCF), a simple and efficient algorithm for learning to sample from unnormalized densities. Our key idea is to learn the desired flow by enforcing self-consistency, bypassing the challenging estimation of the target posterior mean. By jointly exploiting the prescribed target score and the structure of flow matching, we establish these self-consistency requirements as \emph{score-calibrated optimality conditions}. We first derive conditions that characterize the target density, then develop corresponding conditions for the trainable velocity field.

The velocity condition takes the form of a fixed-point equation whose conditional-expectation structure enables training with self-generated samples. Specifically, SCF generates an endpoint with the current flow, interpolates it with independent source noise, and constructs a regression target whose conditional mean equals the right-hand side of the fixed-point equation. This retains the scalable sample--interpolate--regress structure of CFM despite the absence of target samples, without importance sampling or backpropagation through the sampling trajectory.

Our contributions are summarized as follows.
\begin{enumerate}
    \item \textbf{Score-calibrated optimality conditions.}
    We establish necessary and sufficient self-consistency conditions for both the terminal density and the trainable velocity field. Their unique solutions are, respectively, the target density and the ideal flow model that CFM would recover if target samples were available.

    \item \textbf{Scalable training from self-generated samples.}
    We construct a sample-wise regression target whose conditional mean equals the score-calibration operator applied to the current velocity. We use this target in a stop-gradient objective to enforce the velocity condition. Each target requires only one self-generated endpoint and one target-score evaluation, avoiding importance sampling and backpropagation through the sampling trajectory.

    \item \textbf{Principled design and strong RL performance.}
    We derive coefficient schedules that cancel source-end singular factors and handle bounded actions through an exact change of variables that preserves the intended target density. Experiments on online RL benchmarks demonstrate that SCF matches or improves upon state-of-the-art generative-policy baselines, with substantial savings in training time.
\end{enumerate}

\section{Background}
\label{sec:background}

\subsection{Sampling from Unnormalized Densities}
\label{sec:sampling-problem}

We consider sampling from a target density on $\R^d$ defined by a continuously differentiable potential $Q:\R^d\to\R$ and an inverse temperature $\tau>0$:
\begin{equation}
    \pi(x)=\frac{\exp(\tau Q(x))}{\displaystyle\int_{\R^d}\exp(\tau Q(z))\,\dd z}.
    \label{eq:target-density}
\end{equation}
The normalizing integral is assumed finite, and the target score is $\nabla\log\pi(x)=\tau\nabla Q(x)$. Our goal is to learn a sampler for $\pi$ given access to evaluations of $Q(x)$ and its gradient $\nabla Q(x)$, but not to the normalizing constant or direct target samples.

We first develop our method in this general setting, where $x\in\R^d$ denotes a point in data space. We later specialize it to RL, reusing $Q$ to denote the state--action value function $Q(s,a)$. At each fixed state $s$, this function serves as the sampling potential over actions $a$.

\subsection{Flow Matching}
\label{sec:cfm-background}

Flow matching learns a time-dependent velocity field that transports a source density $p_0$ to a target density $p_1$~\citep{lipman2023flow}. We adopt the standard setting of a Gaussian source $p_0=\Normal(0,I)$, an independent coupling of endpoints $X_0\sim p_0$ and $X_1\sim p_1$, and linear interpolation:
\begin{equation}
    X_t=\alpha_tX_1+\beta_tX_0,\qquad t\in[0,1].
    \label{eq:interpolant}
\end{equation}
The interpolation schedules $\alpha,\beta\in C^1([0,1])$ satisfy $\alpha_0=\beta_1=0$, $\alpha_1=\beta_0=1$, and $\alpha_t,\beta_t>0$ for $t\in(0,1)$.

For $t\in(0,1)$, one-sided conditioning on $X_1=x_1$ gives the conditional probability path
\begin{equation}
    p_{t|1}(x_t|x_1)=\Normal(x_t;\alpha_tx_1,\beta_t^2I),
    \label{eq:conditional-density}
\end{equation}
with associated conditional velocity
\begin{equation}
    v_{t|1}(x_t|x_1)
    =\dot\alpha_tx_1+\dot\beta_t\frac{x_t-\alpha_tx_1}{\beta_t}.
    \label{eq:conditional-velocity}
\end{equation}
Conditional flow matching (CFM) uses this velocity as a regression target to train a parameterized field $v^\theta$:
\begin{equation*}
    \calL_{\mathrm{CFM}}(\theta)
    :=\E_{t\sim\U(0,1),\,X_1\sim p_1,\,X_t\sim p_{t|1}(\cdot|X_1)}
    \left[\norm[2]{v_t^\theta(X_t)-v_{t|1}(X_t|X_1)}^2\right].
\end{equation*}
The population minimizer is the conditional mean of the regression target given $(t,X_t)$.

To generate samples with the learned field $v^\theta$, we solve the ordinary differential equation (ODE)
\begin{equation*}
    \frac{\dd Y_t}{\dd t}=v_t^\theta(Y_t),\qquad Y_0\sim p_0,\qquad t\in[0,1].
\end{equation*}
Here $Y_t$ denotes the sampling trajectory, while $X_0,X_t,X_1$ are reserved for the interpolant used to construct training pairs. With perfect CFM training, the terminal state $Y_1$ has density $p_1$. This sample--interpolate--regress construction requires direct target samples $X_1\sim p_1$, which are unavailable in the unnormalized-density setting considered in this paper.

\subsection{Maximum-Entropy Reinforcement Learning}
\label{sec:maxent-rl}

We consider a Markov decision process (MDP) $(\mathcal S,\mathcal A,P,r,d_0,\gamma)$, with state and action spaces $\mathcal S,\mathcal A$, transition kernel $P$, reward function $r$, initial state distribution $d_0$, and discount factor $\gamma\in(0,1)$. In the RL setting, $\pi(a|s)$ denotes a policy. Maximum-entropy RL~\citep{haarnoja2018soft} augments the expected discounted return with policy entropy $\mathcal H(\pi(\cdot|s))=-\E_{a\sim\pi(\cdot|s)}[\log\pi(a|s)]$, weighted by $\frac{1}{\tau}$.

For a policy $\pi$, the regularized self-consistency operator is given by
\begin{equation*}
    (\calB^\pi Q)(s,a)
    =r(s,a)+\gamma\E_{s'\sim P(\cdot|s,a)}
    \left[\E_{a'\sim\pi(\cdot|s')}[Q(s',a')]
    +\frac{1}{\tau}\mathcal H(\pi(\cdot|s'))\right].
\end{equation*}
The Bellman operator is given by $(\calB Q)(s,a)=\max_\pi\{(\calB^\pi Q)(s,a)\}$. The soft Q-function $Q^\pi$ and its optimal counterpart $Q^*$ satisfy $Q^\pi=\calB^\pi Q^\pi$ and $Q^*=\calB Q^*$.

Holding $Q=Q^\pi$ fixed, soft policy improvement maximizes $\E_{a\sim\pi(\cdot|s)}[Q(s,a)]+\frac{1}{\tau}\mathcal H(\pi(\cdot|s))$ at each state, yielding the critic-prescribed Boltzmann policy
\begin{equation}
    \pi^Q(a|s)
    =\frac{\exp(\tau Q(s,a))}
    {\displaystyle\int_{\mathcal A}\exp(\tau Q(s,a'))\,\dd a'}.
    \label{eq:boltzmann-policy}
\end{equation}
Thus, soft policy improvement requires sampling from an unnormalized density specified by the critic.

\section{Score-Calibrated Flow}
\label{sec:scf}

In this section, we develop Score-Calibrated Flow (SCF) for learning generative models to sample from unnormalized densities.
The key paradigm shift enabling SCF is to reinterpret the flow matching framework as a one-to-one correspondence between terminal densities and their associated ideal flow models, rather than solely as a procedure for learning from target samples.
Each terminal density determines the population minimizer of the CFM loss, which is the velocity field that CFM would learn if samples from that density were available. Conversely, pushing the source density through the resulting flow induces the terminal density.

We exploit the two directions of this correspondence to develop our optimality conditions. First, we view the flow model as a function of its terminal density and derive a score-calibrated condition that characterizes the target density through the flow it induces. We then let the trainable velocity field determine its own terminal density through the pushforward map, yielding a self-consistency condition directly on the velocity. This condition takes the form of a fixed-point equation whose conditional-expectation structure enables scalable regression using self-generated samples.
Finally, we develop principled design choices to address practical challenges, including numerical singularities and bounded domains, and demonstrate how to apply SCF to generative online RL.

\subsection{Score-Calibrated Optimality Condition for Terminal Density}
\label{sec:target-characterization}
\label{sec:optimality-equations}

We retain the standard choices from Section~\ref{sec:cfm-background}: a Gaussian source $p_0=\Normal(0,I)$, independent coupling, and the linear interpolant \eqref{eq:interpolant}. Under these choices, the conditional probability path $p_{t|1}$ and conditional velocity $v_{t|1}$ in \eqref{eq:conditional-density}--\eqref{eq:conditional-velocity} are also fixed.

In contrast, the marginal probability path and endpoint posterior depend on the terminal density. For a candidate density $\rho$, we take $X_1\sim\rho$ and use a superscript to make this dependence explicit:
\begin{equation*}
    p_t^\rho(x_t):=\int_{\R^d}\rho(x_1)p_{t|1}(x_t|x_1)\,\dd x_1,
    \qquad
    p_{1|t}^\rho(x_1|x_t):=\frac{\rho(x_1)p_{t|1}(x_t|x_1)}{p_t^\rho(x_t)}.
\end{equation*}
The latter is the posterior density of $X_1$ given the interpolant $X_t=x_t$. It determines the ideal flow model associated with $\rho$:
\begin{equation}
    v_t^\rho(x_t):=\E_{X_1\sim p_{1|t}^\rho(\cdot|x_t)}[v_{t|1}(x_t|X_1)].
    \label{eq:population-velocity}
\end{equation}
This is the velocity field that CFM would learn from samples of $\rho$. Its flow generates the marginal path $p_t^\rho$ and transports the source density to $\rho$. Thus, the posterior, marginal probability path, and marginal velocity vary together with the terminal density, while the conditional probability path and conditional velocity remain unchanged.

To relate the flow model to the terminal score, define
\begin{equation}
    \kappa_t:=\frac{\beta_t}{\alpha_t}(\dot\alpha_t\beta_t-\alpha_t\dot\beta_t),
    \qquad t\in(0,1).
    \label{eq:kappa}
\end{equation}
We assume $\kappa_t>0$, equivalently $\frac{\dd}{\dd t}\log\left(\frac{\alpha_t}{\beta_t}\right)>0$. This requires the ratio $\frac{\alpha_t}{\beta_t}$ to increase along the noise-to-data path, a condition commonly satisfied by standard diffusion and flow schedules. The following identity expresses $v_t^\rho$ in terms of $\nabla\log\rho$.

\begin{restatable}{proposition}{scfScoreVelocity}
\label{prop:score-velocity}
Let $\rho$ be a positive, normalized $C^1$ density satisfying the regularity condition \eqref{eq:posterior-moment-condition} in Appendix~\ref{app:regularity}. For every $t\in(0,1)$ and $x_t\in\R^d$,
\begin{equation}
    v_t^\rho(x_t)=\frac{\dot\alpha_t}{\alpha_t}x_t
    +\frac{\kappa_t}{\alpha_t}\E_{X_1\sim p_{1|t}^\rho(\cdot|x_t)}[\nabla\log\rho(X_1)].
    \label{eq:score-velocity}
\end{equation}
\end{restatable}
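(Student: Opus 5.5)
The plan is to rewrite the conditional velocity \eqref{eq:conditional-velocity} in terms of $x_t$ and $x_1$, and then convert the posterior mean of $X_1$ into a posterior mean of the terminal score by an integration by parts in $x_1$. Expanding \eqref{eq:conditional-velocity} gives
\begin{equation*}
v_{t|1}(x_t|x_1)=\frac{\dot\beta_t}{\beta_t}x_t+\Bigl(\dot\alpha_t-\frac{\alpha_t\dot\beta_t}{\beta_t}\Bigr)x_1 .
\end{equation*}
Averaging over the posterior, and writing $m_t(x_t):=\E_{p_{1|t}^\rho}[X_1]$, we get
\begin{equation*}
v_t^\rho(x_t)=\frac{\dot\beta_t}{\beta_t}x_t+\frac{\dot\alpha_t\beta_t-\alpha_t\dot\beta_t}{\beta_t}\,m_t(x_t).
\end{equation*}
It therefore suffices to express $m_t$ through the posterior-averaged score.

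The key identity is a Stein / Tweedie-type relation for the posterior $p_{1|t}^\rho(x_1|x_t)\propto\rho(x_1)\exp\bigl(-\|x_t-\alpha_tx_1\|^2/(2\beta_t^2)\bigr)$. Its score in $x_1$ is
\begin{equation*}
\nabla_{x_1}\log p_{1|t}^\rho=\nabla\log\rho(x_1)+\frac{\alpha_t}{\beta_t^2}(x_t-\alpha_tx_1).
\end{equation*}
If the posterior mean of this score vanishes, that is, $\int\nabla_{x_1}p_{1|t}^\rho\,\dd x_1=0$, then
\begin{equation*}
m_t=\frac{1}{\alpha_t}x_t+\frac{\beta_t^2}{\alpha_t^2}\E[\nabla\log\rho(X_1)].
\end{equation*}
Substituting this, the coefficient of $x_t$ becomes
\begin{equation*}
\frac{\dot\beta_t}{\beta_t}+\frac{\dot\alpha_t\beta_t-\alpha_t\dot\beta_t}{\alpha_t\beta_t}=\frac{\dot\alpha_t}{\alpha_t}.
\end{equation*}
The coefficient of the score expectation becomes $\frac{(\dot\alpha_t\beta_t-\alpha_t\dot\beta_t)\beta_t}{\alpha_t^2}=\kappa_t/\alpha_t$. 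This is exactly \eqref{eq:score-velocity}.

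The main obstacle is justifying $\int_{\R^d}\nabla_{x_1}\bigl[\rho(x_1)p_{t|1}(x_t|x_1)\bigr]\,\dd x_1=0$, together with the integrability needed for all the expectations above. I would handle this with the regularity condition \eqref{eq:posterior-moment-condition}. I expect it to guarantee that $\nabla\log\rho(X_1)$ and $X_1$ are posterior-integrable, so that $\nabla_{x_1}(\rho\,p_{t|1})$ is in $L^1(\R^d)$; note that $\rho\,p_{t|1}$ is itself integrable with integral $p_t^\rho(x_t)<\infty$.

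Then I would integrate over a ball $B_R$ and apply the divergence theorem componentwise. This leaves a boundary flux over $\partial B_R$. Because both the function and its gradient are $L^1$, the integral over $R$ of the sphere integrals of $|\rho p_{t|1}|$ is finite. Hence along a sequence $R_k\to\infty$ the boundary terms tend to zero, while the volume integrals converge by dominated convergence.

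Equivalently, one can take a smooth cutoff $\chi(x/R)$ and bound the error term by $\frac{C}{R}\int_{R\le|x|\le2R}\rho p_{t|1}\to0$. I would use this cutoff version, since it needs only $L^1$ of $\rho p_{t|1}$ and of its gradient, and positivity of $\rho$ makes the posterior well defined.
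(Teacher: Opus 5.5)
Your proposal is correct and follows essentially the same route as the paper: the paper also shows that the posterior score has zero mean using a smooth cutoff $\varphi_R(x)=\varphi_1(x/R)$, an $L^1$ bound on $\nabla_{x_1}p_{1|t}^\rho$ derived from \eqref{eq:posterior-moment-condition}, dominated convergence, and the $C/R$ estimate. It then takes the posterior expectation of the posterior-score identity and substitutes into the expanded conditional velocity, with the same algebra you carry out. The only cosmetic difference is that the paper works with the normalized posterior rather than the joint $\rho\,p_{t|1}$; this changes nothing, since the two differ by the constant factor $p_t^\rho(x_t)>0$.
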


Applying \eqref{eq:score-velocity} to the target density $\pi$, whose score is $\tau\nabla Q$, gives
\begin{equation}
    v_t^\pi(x_t)=\frac{\dot\alpha_t}{\alpha_t}x_t
    +\frac{\kappa_t}{\alpha_t}\E_{X_1\sim p_{1|t}^\pi(\cdot|x_t)}[\tau\nabla Q(X_1)].
    \label{eq:target-identity}
\end{equation}
We now take a different perspective on this equation: rather than using the target density to describe its associated flow, we seek to characterize the target density through the flow it induces under the CFM construction. Specifically, we treat the terminal density $\rho$ as the unknown, allowing it to determine both the marginal velocity $v^\rho$ and the posterior $p_{1|t}^\rho$, while keeping the prescribed target score $\tau\nabla Q$ fixed. The following theorem establishes the resulting self-consistency equation as a score-calibrated optimality condition for the terminal density.

\begin{restatable}{theorem}{scfTargetCharacterization}
\label{thm:target-characterization}
Let $\pi$ be the target density defined in \eqref{eq:target-density}. Let $\rho$ be a positive, normalized $C^1$ density satisfying regularity conditions \eqref{eq:posterior-moment-condition} and \eqref{eq:nabla-Q-integrability-on-rho-posterior} in Appendix~\ref{app:regularity}. Define the density optimality condition
\begin{equation}
    v_t^\rho(x_t)=\frac{\dot\alpha_t}{\alpha_t}x_t
    +\frac{\kappa_t}{\alpha_t}\E_{X_1\sim p_{1|t}^\rho(\cdot|x_t)}[\tau\nabla Q(X_1)]\quad\text{for every }x_t\in\R^d.
    \label{eq:calibration-equation}
\end{equation}
Then
\begin{equation*}
    \rho=\pi
    \quad\Leftrightarrow\quad
    \text{\eqref{eq:calibration-equation} holds for every }t\in(0,1)
    \quad\Leftrightarrow\quad
    \text{\eqref{eq:calibration-equation} holds for some }t\in(0,1).
\end{equation*}
\end{restatable}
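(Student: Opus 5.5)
The plan is to close the chain $\rho=\pi\Rightarrow$ (all $t$) $\Rightarrow$ (some $t$) $\Rightarrow\rho=\pi$. The middle implication is trivial.

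\textbf{First implication.} If $\rho=\pi$, then $\nabla\log\rho=\tau\nabla Q$ pointwise. Proposition~\ref{prop:score-velocity} applied to $\rho$ gives \eqref{eq:score-velocity}, and substituting the score yields \eqref{eq:calibration-equation} for every $t\in(0,1)$. The required regularity for $\pi$ is assumed for $\rho$ in the theorem, so nothing more is needed.

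\textbf{Reduction for the converse.} Fix some $t\in(0,1)$ at which \eqref{eq:calibration-equation} holds. Since $\rho$ satisfies the regularity condition \eqref{eq:posterior-moment-condition}, Proposition~\ref{prop:score-velocity} also applies to $\rho$ at this $t$. Subtracting its conclusion \eqref{eq:score-velocity} from \eqref{eq:calibration-equation}, and using $\kappa_t/\alpha_t>0$, gives
\begin{equation*}
\E_{X_1\sim p_{1|t}^\rho(\cdot|x_t)}\bigl[g(X_1)\bigr]=0\quad\text{for every }x_t\in\R^d,\qquad g:=\nabla\log\rho-\tau\nabla Q=\nabla\log(\rho/\tilde\pi).
\end{equation*}
Here $\tilde\pi=e^{\tau Q}$, and the expectation of $\tau\nabla Q$ is well defined by \eqref{eq:nabla-Q-integrability-on-rho-posterior}. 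Writing out the posterior and multiplying by $p_t^\rho(x_t)>0$, this says
\begin{equation*}
\int\rho(x_1)\,g(x_1)\,\exp\!\Bigl(-\tfrac{\|x_t-\alpha_tx_1\|^2}{2\beta_t^2}\Bigr)\dd x_1=0\quad\text{for all }x_t.
\end{equation*}
The left side is the Gaussian convolution of the vector-valued function $h:=\rho g$, rescaled by $\alpha_t$. The key step is to show $h=0$ almost everywhere. I would do this by deconvolution. The integrability assumptions give $h\in L^1$, or at least $h$ Gaussian-tempered, so I would take Fourier transforms in $x_t$. The Gaussian kernel has a nowhere-vanishing Fourier transform, so $\Fourier[h(\cdot/\alpha_t)]=0$ and hence $h=0$.

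\textbf{Conclusion and main obstacle.} Since $\rho>0$, $h=0$ forces $g=0$ everywhere by continuity, so $\nabla\log(\rho/\tilde\pi)=0$. Therefore $\rho=c\,e^{\tau Q}$ on the connected space $\R^d$. Both $\rho$ and $\pi$ are normalized, so $c=1/\int e^{\tau Q}$ and $\rho=\pi$. The main obstacle is justifying the Fourier step: we need $h$ to be integrable, or at least a tempered distribution. I expect conditions \eqref{eq:posterior-moment-condition} and \eqref{eq:nabla-Q-integrability-on-rho-posterior} to be designed for this, guaranteeing $\int\rho\,|\nabla\log\rho|\,e^{-\|x_t-\alpha_tx_1\|^2/2\beta_t^2}<\infty$ and likewise for $\nabla Q$. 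If global $L^1$ fails, I would fall back on an alternative argument: the map $x_t\mapsto\int h(x_1)e^{\langle x_t,\alpha_tx_1\rangle/\beta_t^2}e^{-\alpha_t^2\|x_1\|^2/2\beta_t^2}\dd x_1$ is a two-sided Laplace transform that vanishes identically. If it is finite on all of $\R^d$, it is real-analytic, and uniqueness of the Laplace transform gives $h\,e^{-\alpha_t^2\|x_1\|^2/2\beta_t^2}=0$.
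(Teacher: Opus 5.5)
Your proof is correct, and your fallback is exactly the paper's argument. It should be the main line rather than a fallback, because the primary Fourier-deconvolution route is not supported by the hypotheses.

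Conditions \eqref{eq:posterior-moment-condition} and \eqref{eq:nabla-Q-integrability-on-rho-posterior} only make $\rho\|g\|$ integrable against the Gaussian weights $e^{-\|x_t-\alpha_tx_1\|^2/(2\beta_t^2)}$. They do not give $h=\nabla\rho-\tau\rho\nabla Q\in L^1$. For example, in $d=1$, a normalized sum of a Gaussian and smooth bumps of height $1/n$ and width $1/n^2$ at $x=n$ is a positive $C^1$ density with $\int|\rho'|\,\dd x=\infty$, yet it satisfies both conditions. So deconvolving $h$ itself is not justified as you stated it.

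The paper instead absorbs the Gaussian factor, setting $h_t(x_1):=h(x_1)e^{-\alpha_t^2\|x_1\|^2/(2\beta_t^2)}$. It then observes that $\int\|h_t(x_1)\|e^{\alpha_tx_t^\top x_1/\beta_t^2}\,\dd x_1<\infty$ for every $x_t$. Taking $x_t=0$ gives $h_t\in L^1$, and the vanishing transform is absolutely convergent on all of $\R^d$.

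The paper proves the Laplace uniqueness you invoke as Proposition~\ref{prop:exponential-average-uniqueness}. For fixed $\omega$, the map $z\mapsto\int e^{z\omega^\top x}h_t(x)\,\dd x$ is entire and vanishes on $\R$, hence at $z=-\mathrm{i}$. This gives $\Fourier[h_t]=0$, and Fourier injectivity yields $h_t=0$ almost everywhere. The property that matters is this holomorphic extension along complex lines, not real-analyticity in $x_t$. The rest of your argument (positivity and continuity giving $g\equiv0$, then normalization) matches the paper.
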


The sufficiency of $\rho=\pi$ follows directly from \eqref{eq:target-identity}, which holds at every interior time. The necessity is nontrivial: satisfying \eqref{eq:calibration-equation} for every $x_t$ at even a single $t\in(0,1)$ forces $\rho=\pi$. Thus, requiring the marginal velocity $v^\rho$ to agree with the velocity reconstructed from the prescribed target score $\tau\nabla Q$ under the posterior $p_{1|t}^\rho$ uniquely identifies $\rho=\pi$. Enforcing this relation for every $x_t$ at a single interior time is equivalent to enforcing it throughout the interpolation path.

\paragraph{Generalized optimality condition for terminal density.}
The preceding condition identifies the desired density through its associated flow model. We next generalize its form without changing its solution, introducing coefficients that will later allow us to address numerical singularities and adjust the sample-wise training signal.

First, for any $\lambda_t\ne0$, we can equivalently write \eqref{eq:calibration-equation} as
\begin{equation}
    v_t^\rho(x_t)=(1-\lambda_t)v_t^\rho(x_t)
    +\lambda_t\left(\frac{\dot\alpha_t}{\alpha_t}x_t
    +\frac{\kappa_t}{\alpha_t}
    \E_{X_1\sim p_{1|t}^\rho(\cdot|x_t)}[\tau\nabla Q(X_1)]\right).
    \label{eq:relaxed-population-update}
\end{equation}
The coefficient $\lambda_t$ controls the weight of the score-based term while preserving $\rho=\pi$ as the unique solution. Next, recall that the marginal velocity is the conditional expectation of $v_{t|1}$. We can therefore express part of the remaining velocity term through this relationship: for any $\zeta_t\in\R$,
\begin{equation}
    (1-\lambda_t)v_t^\rho(x_t)
    =(1-\lambda_t-\zeta_t)v_t^\rho(x_t)
    +\zeta_t\E_{X_1\sim p_{1|t}^\rho(\cdot|x_t)}
    [v_{t|1}(x_t|X_1)].
    \label{eq:velocity-decomposition}
\end{equation}
Substituting \eqref{eq:velocity-decomposition} into \eqref{eq:relaxed-population-update} gives the following generalized optimality condition.

\begin{restatable}{proposition}{scfAffineCalibration}
\label{prop:affine-calibration}
Adopt the notation and assumptions of Theorem~\ref{thm:target-characterization}. Let $\lambda_t\ne0$ and $\zeta_t\in\R$. Define the generalized density optimality condition
\begin{align}
    v_t^\rho(x_t)=&(1-\lambda_t-\zeta_t)v_t^\rho(x_t)
    +\zeta_t\E_{X_1\sim p_{1|t}^\rho(\cdot|x_t)}[v_{t|1}(x_t|X_1)]\notag\\
    &+\lambda_t\left(\frac{\dot\alpha_t}{\alpha_t}x_t
    +\frac{\kappa_t}{\alpha_t}\E_{X_1\sim p_{1|t}^\rho(\cdot|x_t)}[\tau\nabla Q(X_1)]\right)\quad\text{for every }x_t\in\R^d.
    \label{eq:generalized-optimality}
\end{align}
Then
\begin{equation*}
    \rho=\pi
    \quad\Leftrightarrow\quad
    \text{\eqref{eq:generalized-optimality} holds for every }t\in(0,1)
    \quad\Leftrightarrow\quad
    \text{\eqref{eq:generalized-optimality} holds for some }t\in(0,1).
\end{equation*}
\end{restatable}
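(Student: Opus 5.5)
The plan is to reduce Proposition~\ref{prop:affine-calibration} to Theorem~\ref{thm:target-characterization}, by showing that for each fixed $t\in(0,1)$ the generalized condition \eqref{eq:generalized-optimality} and the density optimality condition \eqref{eq:calibration-equation} are equivalent pointwise in $x_t$. Once this holds at a single $t$, the three-way chain of equivalences follows immediately from the theorem. The theorem's conclusion is about the set of times at which the condition holds, so a time-by-time equivalence transfers both the ``every $t$'' and the ``some $t$'' statements.

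First I check that every term in \eqref{eq:generalized-optimality} is well defined and finite under the stated assumptions. The conditional mean $\E_{X_1\sim p_{1|t}^\rho(\cdot|x_t)}[v_{t|1}(x_t|X_1)]$ is, by definition \eqref{eq:population-velocity}, exactly $v_t^\rho(x_t)$. This requires $\E_{p_{1|t}^\rho}\|X_1\|<\infty$, which I expect is guaranteed by the posterior moment condition \eqref{eq:posterior-moment-condition}, since $v_{t|1}$ is affine in $x_1$. The term involving $\tau\nabla Q$ is finite by \eqref{eq:nabla-Q-integrability-on-rho-posterior}.

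Next I substitute $\E_{p_{1|t}^\rho}[v_{t|1}(x_t|X_1)]=v_t^\rho(x_t)$ into the right-hand side of \eqref{eq:generalized-optimality}, so the $\zeta_t$ terms cancel. Writing $R_t(x_t)$ for the score-reconstructed velocity in the bracket, the condition becomes
\[
v_t^\rho(x_t)=(1-\lambda_t)v_t^\rho(x_t)+\lambda_t R_t(x_t),
\]
which rearranges to $\lambda_t\bigl(v_t^\rho(x_t)-R_t(x_t)\bigr)=0$. Since $\lambda_t\neq0$, this is equivalent to $v_t^\rho(x_t)=R_t(x_t)$, which is exactly \eqref{eq:calibration-equation}. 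The argument holds for every $x_t$ and each fixed $t$, and the theorem then concludes the proof.

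There is no real obstacle here. The only delicate point is making sure the integrability assumptions justify identifying the conditional mean of $v_{t|1}$ with $v_t^\rho$, so that the cancellation is legitimate.
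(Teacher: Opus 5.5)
Your proposal is correct and matches the paper's proof: substitute \eqref{eq:population-velocity} to cancel the $\zeta_t$ terms, use $\lambda_t\neq0$ to reduce \eqref{eq:generalized-optimality} to \eqref{eq:calibration-equation} at each fixed $t$, and then invoke Theorem~\ref{thm:target-characterization}. The paper leaves one detail implicit that you check explicitly: \eqref{eq:posterior-moment-condition} makes $v_t^\rho(x_t)$ finite, so the rearrangement is legitimate.
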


The generalized condition retains the same unique solution while providing flexibility in the eventual training construction. The coefficient $\lambda_t$ controls the strength of score calibration, while $\zeta_t$ determines how much of the marginal velocity is represented through the conditional velocity.

The optimality conditions for $\rho$ provide an important first step by characterizing the desired terminal density. However, the object we train is the velocity field, not the density itself. To obtain an optimality condition directly on the velocity, we exploit the reverse direction of the correspondence: a velocity field determines its terminal density through the pushforward of the source. Using this generated density to determine the posterior leads to a self-consistency condition on the trainable velocity.

\subsection{Score-Calibrated Optimality Condition for Velocity}
\label{sec:velocity-optimality}

For a velocity field $v$, let $\flow_{0\to t}^v$ be the flow map of $\dot Y_t=v_t(Y_t)$ and define its terminal density by
\begin{equation*}
    \rho^v:=(\flow_{0\to1}^v)_\#p_0,
\end{equation*}
where $\#$ denotes pushforward. The regularity requirements for well-posedness of this definition are specified in Appendix~\ref{app:regularity}.

We now seek a velocity field that satisfies the score-based relation in \eqref{eq:calibration-equation}, with the posterior determined by its own terminal density. To express this self-consistency requirement, define the \emph{score-calibration operator} by
\begin{equation}
    (\iter v)_t(x_t):=\frac{\dot\alpha_t}{\alpha_t}x_t
    +\frac{\kappa_t}{\alpha_t}\E_{X_1\sim p_{1|t}^{\rho^v}(\cdot|x_t)}[\tau\nabla Q(X_1)]
    \label{eq:score-calibration-operator}
\end{equation}
and consider the fixed-point equation $v=\iter v$. The same field supplies the velocity on the left and generates the terminal density used to form the posterior on the right. The desired field $v^\pi$ satisfies this equation by \eqref{eq:target-identity}. The following theorem establishes that this condition is both necessary and sufficient to identify the optimal velocity.

\begin{restatable}{theorem}{scfVelocityOptimality}
\label{thm:velocity-optimality}
Let $\pi$ be the target density defined in \eqref{eq:target-density}, and assume that the regularity condition \eqref{eq:posterior-moment-condition} holds for $\rho=\pi$. Assume that $v^\pi$ and $v$ are admissible, as specified in Appendix~\ref{app:regularity}. Assume that $v$ satisfies \eqref{eq:source-growth} and \eqref{eq:weighted-exponential-moment}. Then, $v=\iter v$ holds on $(0,1)\times\R^d$ if and only if $\rho^v=\pi$ and $v=v^\pi$.
\end{restatable}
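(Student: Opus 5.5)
The plan is to reduce the velocity fixed-point equation to the density condition of Theorem~\ref{thm:target-characterization}, using the pushforward correspondence.

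\emph{Sufficiency.} If $\rho^v=\pi$ and $v=v^\pi$, then the right-hand side of $v=\iter v$ is built from the posterior $p_{1|t}^{\pi}$. By \eqref{eq:target-identity} (Proposition~\ref{prop:score-velocity} applied to $\rho=\pi$, whose score is $\tau\nabla Q$), it equals $v_t^\pi(x_t)=v_t(x_t)$ for every $t\in(0,1)$ and $x_t$.

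\emph{Necessity.} Suppose $v=\iter v$, and set $\rho:=\rho^v$. The key step is to show that $v$ coincides with the ideal CFM velocity $v^\rho$ of its own terminal density. Once this holds, the fixed-point equation reads exactly as the density condition \eqref{eq:calibration-equation} for $\rho$. Theorem~\ref{thm:target-characterization} then gives $\rho=\pi$, and hence $v=v^\rho=v^\pi$.

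To show $v=v^\rho$, I would not attack it directly. Instead I would compare the probability paths.
\begin{enumerate}
\item Let $q_t:=(\flow^v_{0\to t})_\#p_0$ be the path generated by $v$. It solves the continuity equation $\partial_t q_t+\nabla\cdot(q_tv_t)=0$, with $q_0=p_0$ and $q_1=\rho$.
\item The path $p_t^\rho$ solves the continuity equation with velocity $v^\rho$. It has the same endpoints $p_0$ at $t=0$ and $\rho$ at $t=1$.
\item Apply the identity underlying Proposition~\ref{prop:score-velocity}, used before the score substitution. Stein's identity for the Gaussian conditional, i.e.\ $\nabla_{x_t}\log p_{t|1}=-(x_t-\alpha_tx_1)/\beta_t^2$, together with integration by parts in $x_1$, turns $\E_{p^{\rho}_{1|t}}[\tau\nabla Q]$ into $\E_{p^{\rho}_{1|t}}[\nabla\log\rho]$ plus a correction term. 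This correction is $\E_{p^\rho_{1|t}}[\tau\nabla Q-\nabla\log\rho]=\E_{p^\rho_{1|t}}[\nabla\log(\pi/\rho)]$. Consequently $v_t=v_t^\rho+\frac{\kappa_t}{\alpha_t}\E_{p^\rho_{1|t}}[\nabla\log(\pi/\rho)]$ on all of $(0,1)\times\R^d$.
\end{enumerate}
Steps 1--3 mean that $v$ is an explicit perturbation $w_t:=v_t-v^\rho_t$ of $v^\rho$, where $w_t$ is the $\kappa_t/\alpha_t$-weighted posterior mean of $\nabla\log(\pi/\rho)$.

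It remains to show $w\equiv0$, which I expect to be the main obstacle. One route is to compute $\frac{\dd}{\dd t}\mathrm{KL}(q_t\|p_t^\rho)$ using the two continuity equations. This gives $\int q_t\,(w_t)\cdot\nabla\log(q_t/p_t^\rho)$, which vanishes at $t=0$ because both paths start at $p_0$. If the paths nonetheless stay close, uniqueness should force $q_t=p^\rho_t$.

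A cleaner route is a self-consistency argument at $t\to1$. Here $p_{1|t}^\rho$ concentrates at $x_t$, so $w_t(x)\approx \frac{\kappa_t}{\alpha_t}\nabla\log(\pi/\rho)(x)$. Meanwhile $q_t\to\rho$ and $p_t^\rho\to\rho$ at the same time $t=1$. I would instead work near the source end. As $t\to0$, $p^\rho_{1|t}\to\rho$, so $w_t\to\frac{\kappa_t}{\alpha_t}\E_\rho[\nabla\log(\pi/\rho)]$, which is a constant vector. Note that $\E_\rho[\nabla\log\rho]=0$.

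The growth condition \eqref{eq:source-growth} keeps the ODE well-posed at the singular source end. The weighted moment bound \eqref{eq:weighted-exponential-moment} justifies the integrations by parts and the KL differentiation. Combining these, I would run a Gr\"onwall-type argument on the relative entropy or relative Fisher information between $q_t$ and $p_t^\rho$ to conclude $q_t=p^\rho_t$ for all $t$.

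Then both $v$ and $v^\rho$ transport the same path. The remaining freedom is a divergence-free (w.r.t.\ $q_t$) difference. This is excluded because $w_t$ is a posterior mean of a gradient, and hence determined by $x_t$ through the same posterior kernel. If this fails, I would fall back on arguing directly that the fixed point at a single $t$ already forces $\nabla\log(\pi/\rho)=0$, via injectivity of the Gaussian posterior-averaging operator (a Weierstrass transform argument as in Theorem~\ref{thm:target-characterization}).
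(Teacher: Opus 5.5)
Your sufficiency direction is fine, but the necessity direction has a genuine gap. All the content lies in showing $w\equiv0$ (equivalently $\rho^v=\pi$), and none of your routes supplies a mechanism that forces it.

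\begin{itemize}
\item \emph{Endpoint agreement.} Both paths have $q_0=p_0$ and $q_1=\rho$, but this constrains nothing, since many fields transport $p_0$ to $\rho$.
\item \emph{Gr\"onwall on relative entropy.} Your computation $\frac{\dd}{\dd t}\mathrm{KL}(q_t\|p_t^\rho)=\int q_t\,w_t\cdot\nabla\log(q_t/p_t^\rho)$ can only be bounded by $\norm[L^2(q_t)]{w_t}$ times the square root of a relative Fisher information. Here $w_t$ is an exogenous forcing fixed by $\rho$ versus $\pi$, not by the discrepancy $q_t-p_t^\rho$, and Fisher information is not controlled by KL. So no Gr\"onwall inequality closes.
\item \emph{Divergence-free step.} Even granting $q_t=p_t^\rho$, you only get $\nabla\cdot(p_t^\rho w_t)=0$. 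After undoing the Gaussian transform this is essentially the stationary Fokker--Planck equation $\nabla\cdot(\rho\,\tau\nabla Q-\nabla\rho)=0$. Saying $w_t$ is a ``posterior mean of a gradient'' does not exclude a nonzero divergence-free flux.
\item \emph{Fallback.} The Weierstrass-transform injectivity in Theorem~\ref{thm:target-characterization} works because the left-hand side there is $v_t^\rho$, itself a posterior average. The difference is then a Gaussian transform of $\rho(\nabla\log\rho-\tau\nabla Q)$. Here the left-hand side is the unknown $v_t$, so invoking that argument presupposes $v=v^\rho$. Indeed, a single time slice of $v=\iter v$ carries no information about $\rho^v$: redefining $v$ on that slice to equal the right-hand side leaves the flow, hence $\rho^v$, unchanged.
\item \emph{Minor.} Step~3 applies Proposition~\ref{prop:score-velocity} to $\rho^v$, which needs posterior integrability of $\nabla\log\rho^v$. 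That is not among the hypotheses.
\end{itemize}

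The missing idea is to use the source end quantitatively. The growth condition \eqref{eq:source-growth} is not a well-posedness device (admissibility already gives that); it is the engine of the proof. Note that it is imposed on the expanding balls $\norm[2]{x_t}\le c_t\delta$ with $c_t:=\beta_t^2/\alpha_t\to\infty$. Evaluating $v=\iter v$ at $x_t=c_t\xi$ and using \eqref{eq:kappa} gives
\[
\frac{\alpha_t}{\beta_t^2}v_t(c_t\xi)-\frac{\dot\beta_t}{\beta_t}\xi=\frac{\kappa_t}{\beta_t^2}\Bigl(\xi+\E_{X_1\sim p_{1|t}^{\rho^v}(\cdot|c_t\xi)}[\tau\nabla Q(X_1)]\Bigr).
\]
The left side lies in $L^1(0,t_0)$ by \eqref{eq:source-growth}, whereas $\int_0^{t_0}\kappa_t/\beta_t^2\,\dd t=\log(\alpha_{t_0}/\beta_{t_0})-\lim_{t\downarrow0}\log(\alpha_t/\beta_t)=\infty$.

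At $x_t=c_t\xi$ the posterior is proportional to $e^{\xi^\top x_1-\alpha_t^2\norm[2]{x_1}^2/(2\beta_t^2)}\rho^v(x_1)$. Under \eqref{eq:weighted-exponential-moment}, dominated convergence sends this to the exponential tilt $e^{\xi^\top x}\rho^v(x)$. So the bracket has a limit, and that limit must vanish. This yields
\[
\int e^{\xi^\top x}\rho^v(x)\bigl(\tau\nabla Q(x)+\xi\bigr)\,\dd x=0
\]
for all small $\xi$, a family of tilted Stein (Laplace-transform) identities. Analytic continuation in $\xi$ plus Fourier injectivity then give $\nabla\rho^v=\rho^v\tau\nabla Q$, so $\rho^v=\pi$. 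Substituting into $v=\iter v$ and using \eqref{eq:target-identity} gives $v=v^\pi$.

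Your fixed-$x$ limit as $t\to0$ is just the $\xi=0$ member of this family and yields only $\E_{\rho^v}[\tau\nabla Q]=0$. The scaling $x_t=c_t\xi$ is what produces enough identities. This argument also never needs continuity equations or $v=v^\rho$ as an intermediate step: $v=v^\pi$ follows only once $\rho^v=\pi$ is known.
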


The nontrivial implication is that self-consistency forces both $\rho^v=\pi$ and $v=v^\pi$. Thus, the condition does not merely identify the desired terminal density; it uniquely determines the ideal flow model that CFM would learn if target samples were available. This completes the transition from an optimality condition on the density to one directly on the trainable velocity.

\paragraph{Generalized optimality condition for velocity.}
We next extend the two-coefficient construction in \eqref{eq:generalized-optimality} to the velocity formulation. The explicit marginal-velocity term is replaced by the trainable field $v$, while its generated density $\rho^v$ determines both posterior expectations. This gives the \emph{generalized score-calibration operator}
\begin{align}
    (\iter_{\lambda,\zeta}v)_t(x_t):=&(1-\lambda_t-\zeta_t)v_t(x_t)
    +\zeta_t\E_{X_1\sim p_{1|t}^{\rho^v}(\cdot|x_t)}[v_{t|1}(x_t|X_1)]\notag\\
    &+\lambda_t\left(\frac{\dot\alpha_t}{\alpha_t}x_t
    +\frac{\kappa_t}{\alpha_t}\E_{X_1\sim p_{1|t}^{\rho^v}(\cdot|x_t)}[\tau\nabla Q(X_1)]\right).
    \label{eq:generalized-score-calibration-operator}
\end{align}
The corresponding optimality condition is $v=\iter_{\lambda,\zeta}v$. It combines the current velocity, the conditional-velocity expectation, and the target-score expression, with coefficients $\lambda_t$ and $\zeta_t$ controlling their contributions. Setting $\lambda_t=1$ and $\zeta_t=0$ recovers the original condition $v=\iter v$. The following theorem gives a coefficient condition under which the generalized velocity condition also uniquely identifies the desired flow model.

\begin{restatable}{theorem}{scfGeneralizedOptimality}
\label{thm:generalized-optimality}
Assume the hypotheses of Theorem~\ref{thm:velocity-optimality} on $v$ and $\pi$, and let $t_0$ be as in \eqref{eq:source-growth}. Let $\lambda_t>0$ and $\zeta_t\ge0$ be measurable, finite schedules on $(0,1)$ satisfying
\begin{equation}
    \int_0^{t_0}\frac{\lambda_t}{\lambda_t+\zeta_t}\frac{\kappa_t}{\beta_t^2}\,\dd t=\infty.
    \label{eq:generalized-uniqueness-condition}
\end{equation}
Then $v=\iter_{\lambda,\zeta}v$ holds on $(0,1)\times\R^d$ if and only if $\rho^v=\pi$ and $v=v^\pi$.
\end{restatable}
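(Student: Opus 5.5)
The argument reduces Theorem~\ref{thm:generalized-optimality} to Theorem~\ref{thm:velocity-optimality}, or more precisely to the mechanism behind it. The obstacle is that $v=\iter_{\lambda,\zeta}v$ no longer forces $v=\iter v$ pointwise, because of the extra term $\zeta_t(\E[v_{t|1}]-v_t)$.

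\textbf{Sufficiency.} If $\rho^v=\pi$ and $v=v^\pi$, then $\E_{p_{1|t}^{\pi}}[v_{t|1}(x_t|X_1)]=v^\pi_t(x_t)$ by \eqref{eq:population-velocity}. Also $v^\pi=\iter v^\pi$ by \eqref{eq:target-identity}. Substituting both into \eqref{eq:generalized-score-calibration-operator} gives $\iter_{\lambda,\zeta}v^\pi=(1-\lambda_t-\zeta_t)v^\pi+\zeta_t v^\pi+\lambda_t v^\pi=v^\pi$.

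\textbf{Necessity, step 1: rewrite the fixed point.} Write $\rho=\rho^v$ and $u_t:=v^{\rho}_t=\E_{p^{\rho}_{1|t}}[v_{t|1}]$, the ideal CFM field of the generated density. Proposition~\ref{prop:score-velocity} applied to $\rho$ gives $u_t=\frac{\dot\alpha_t}{\alpha_t}x_t+\frac{\kappa_t}{\alpha_t}\E_{p^\rho_{1|t}}[\nabla\log\rho]$. Hence
\[
(\iter v)_t-u_t=\frac{\kappa_t}{\alpha_t}\E_{p^\rho_{1|t}}[\nabla(\tau Q-\log\rho)]=:\frac{\kappa_t}{\alpha_t}g_t .
\]
The fixed point $v=\iter_{\lambda,\zeta}v$ then reads $(\lambda_t+\zeta_t)(v_t-u_t)=\lambda_t\frac{\kappa_t}{\alpha_t}g_t$, that is,
\[
v_t=u_t+\frac{\lambda_t}{\lambda_t+\zeta_t}\frac{\kappa_t}{\alpha_t}g_t .
\]
So $v$ differs from the field that exactly transports $p_0$ to $\rho$ by a drift proportional to the posterior-averaged score mismatch $h:=\nabla(\tau Q-\log\rho)$. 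When $\zeta\equiv0$ this is exactly the relation exploited in Theorem~\ref{thm:velocity-optimality}. The plan is to rerun that argument with the weight $w_t:=\lambda_t/(\lambda_t+\zeta_t)\in(0,1]$ inserted.

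\textbf{Necessity, step 2: transport contradiction.} Both $v$ and $u$ push $p_0$ forward to the same density $\rho$ at time $1$. The marginal $q_t$ of $v$ therefore satisfies the continuity equation with velocity $u_t+w_t\frac{\kappa_t}{\alpha_t}g_t$, while $p^\rho_t$ satisfies it with velocity $u_t$, and the two share endpoints $q_0=p_0=p^\rho_0$ and $q_1=p^\rho_1=\rho$. I would track a functional of the discrepancy, such as $\frac{\dd}{\dd t}\mathrm{KL}(q_t\|p^\rho_t)$, or a weighted moment of $\tau Q-\log\rho$ as in \eqref{eq:weighted-exponential-moment}, and show it picks up a sign-definite contribution $\ge c\,w_t\frac{\kappa_t}{\beta_t^2}\cdot(\cdot)$ near the source.

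\textbf{Main obstacle.} Near $t=0$ the posterior $p^\rho_{1|t}$ is close to $\rho$ itself and $\alpha_t\to0$. Using the source-growth bound \eqref{eq:source-growth} on $v$ up to $t_0$, the drift $g_t$ behaves like $\E_\rho[h]$ plus corrections. The accumulated effect then scales like $\int_0^{t_0}w_t\kappa_t\beta_t^{-2}\,\dd t$, which is infinite by \eqref{eq:generalized-uniqueness-condition}. This contradicts the finite velocity and moment bounds unless $g\equiv0$; the hard part is showing that this contribution is genuinely sign-definite and controlled. Once $g_t\equiv0$ for some $t$, we have $v=\iter v$ at that time. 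Then \eqref{eq:calibration-equation} holds there, Theorem~\ref{thm:target-characterization} gives $\rho=\pi$, and $v=u=v^\pi$ follows.
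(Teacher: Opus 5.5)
Your sufficiency direction and the Step~1 identity $(\lambda_t+\zeta_t)(v_t-u_t)=\lambda_t\bigl((\iter v)_t-u_t\bigr)$ are correct. Your plan to rerun the proof of Theorem~\ref{thm:velocity-optimality} with the weight $w_t=\lambda_t/(\lambda_t+\zeta_t)$ inserted is also exactly what the paper does. The gap is Step~2, which carries the whole content of the theorem and is not yet an argument. You name a functional but never produce its sign-definite contribution, and there is no reason to expect one. For instance, $\frac{\dd}{\dd t}\mathrm{KL}(q_t\|p^\rho_t)=\int q_t(v_t-u_t)^\top\nabla\log(q_t/p^\rho_t)\,\dd x$ pairs the drift mismatch with the gradient of an unknown log-ratio. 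Since this KL vanishes at both $t=0$ and $t=1$, you would need precisely the sign you cannot control. Your heuristic that the posterior near $t=0$ is close to $\rho$ also points the wrong way. At fixed $x_t$ it yields in the limit only the single identity $\E_\rho[\tau\nabla Q]=0$, which does not determine $\rho$. Likewise, no available estimate delivers your endgame (obtaining $g_t\equiv0$ at some interior $t$ and invoking Theorem~\ref{thm:target-characterization}); the information is only extracted in the limit $t\downarrow0$.

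The missing idea is to evaluate the fixed-point relation along the rescaled points $x_t=c_t\xi$, with $c_t:=\beta_t^2/\alpha_t$, for each fixed $\xi$ with $\norm[2]{\xi}$ below $\delta$ and $\varepsilon$. Multiplying by $\alpha_t/\beta_t^2$ gives, with expectations under $p^{\rho}_{1|t}(\cdot|c_t\xi)$,
\[
\frac{\alpha_t}{\beta_t^2}v_t(c_t\xi)-\frac{\dot\beta_t}{\beta_t}\xi
=(1-w_t)\frac{\alpha_t^2\kappa_t}{\beta_t^4}\E[X_1]
+w_t\frac{\kappa_t}{\beta_t^2}\bigl(\xi+\E[\tau\nabla Q(X_1)]\bigr).
\]
Three facts then combine:
\begin{itemize}
\item The left side is integrable on $(0,t_0)$ by \eqref{eq:source-growth}, whose ball $\norm[2]{x_t}\le c_t\delta$ is tailored to exactly these points.
\item The term created by $\zeta$ is integrable, because $\alpha_t^2\kappa_t/\beta_t^4=\frac12\frac{\dd}{\dd t}(\alpha_t^2/\beta_t^2)$ and the posterior mean stays bounded.
\item By Proposition~\ref{prop:tilted-posterior-limit}, the posterior at $c_t\xi$ converges to the normalized exponential tilt of $\rho$, so $\xi+\E[\tau\nabla Q(X_1)]$ has a limit.
\end{itemize}
The divergence \eqref{eq:generalized-uniqueness-condition} then forces this limit to vanish, i.e.\ $\int e^{\xi^\top x}\rho(x)\bigl(\xi+\tau\nabla Q(x)\bigr)\,\dd x=0$ for all small $\xi$. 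Proposition~\ref{prop:exponential-tilt-identification} (analytic continuation in $\xi$ plus Fourier inversion) then gives $\rho=\pi$. After that, both posterior terms equal $v^\pi_t$, hence $v=v^\pi$.

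Your decomposition fits this scheme if you evaluate it at $c_t\xi$ and write $g_t(c_t\xi)=\xi+\E[\tau\nabla Q(X_1)]-\frac{\alpha_t^2}{\beta_t^2}\E[X_1]$ directly from the formula for $u_t$. Do not go through $\nabla\log\rho$. Applying Proposition~\ref{prop:score-velocity} or Theorem~\ref{thm:target-characterization} to $\rho^v$ needs the posterior integrability of $\nabla\log\rho^v$ from \eqref{eq:posterior-moment-condition}, which is not among the hypotheses. Admissibility and \eqref{eq:weighted-exponential-moment} control only $X_1$ and $\tau\nabla Q$.
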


Under the conditions of Theorem~\ref{thm:generalized-optimality}, the original and generalized velocity equations have the same unique solution $v^\pi$. The coefficients therefore provide freedom to weight the conditional-velocity and target-score terms while preserving the desired flow model.

To clarify the relationship with the density formulation, consider a field $v=v^\rho$ obtained from the CFM construction. Its flow generates the terminal density $\rho$, so both operators use the posterior $p_{1|t}^\rho$. Moreover, by \eqref{eq:population-velocity}, the conditional expectation of $v_{t|1}$ under $p_{1|t}^\rho(\cdot|x_t)$ equals the input velocity $v_t^\rho(x_t)$. Consequently, the original velocity condition becomes \eqref{eq:calibration-equation}, and the generalized velocity condition becomes \eqref{eq:generalized-optimality}. Consequently, under the assumptions of Theorem~\ref{thm:target-characterization} and for $\lambda_t\ne0$ on $(0,1)$,
\begin{equation*}
    \iter v^\rho=v^\rho
    \quad\Longleftrightarrow\quad
    \iter_{\lambda,\zeta}v^\rho=v^\rho
    \quad\Longleftrightarrow\quad\rho=\pi.
\end{equation*}

For a general trainable field $v$, the reduction above no longer applies, since $v$ need not coincide with the ideal flow model associated with its generated density. Theorem~\ref{thm:generalized-optimality} establishes a stronger characterization: under the stated assumptions and coefficient condition~\eqref{eq:generalized-uniqueness-condition}, the generalized velocity condition forces both $\rho^v=\pi$ and $v=v^\pi$, without presupposing that the candidate field is obtained through the CFM construction.

The preceding velocity theorems require a growth bound near the source and a weighted exponential moment of the generated density. The latter can be difficult to verify since $\rho^v$ is defined implicitly by the sampling dynamics. We provide another result (Proposition~\ref{prop:global-growth} in Appendix~\ref{app:velocity-proofs}) that gives sufficient conditions stated directly in terms of $v$ and the target score $\tau \nabla Q$: an integrable linear-growth bound over the full sampling interval, together with polynomial growth of the target score, implies both requirements without evaluating $\rho^v$.

We now turn the generalized velocity condition into a training objective. Its fixed-point form asks the current prediction to equal the conditional expectation defining $\iter_{\lambda,\zeta}v$. Rather than evaluate this expectation explicitly, we construct a sample-wise target from self-generated endpoints and independent source noise. Its conditional mean equals the operator output, so its difference from the current prediction provides a sampled correction for the fixed-point residual. The next subsection uses this construction in a stop-gradient objective.

\subsection{Learning from Self-Generated Samples}
\label{sec:practical-training}

We seek to enforce the generalized velocity condition through its fixed-point residual,
\begin{equation}
    (\iter_{\lambda,\zeta}v)_t(x_t)-v_t(x_t)=0.
    \label{eq:practical-fixed-point}
\end{equation}
The conditional-expectation structure of the generalized score-calibration operator provides a sample-wise training signal. For an admissible field $v$, combine the two expectations in \eqref{eq:generalized-score-calibration-operator} and place the current-prediction term inside the same expectation. The resulting integrand defines the target
\begin{equation}
    \target_{v,\lambda,\zeta}(t,x_t,x_1):=\zeta_tv_{t|1}(x_t|x_1)
    +(1-\lambda_t-\zeta_t)v_t(x_t)
    +\lambda_t\left(\frac{\dot\alpha_t}{\alpha_t}x_t
    +\frac{\kappa_t}{\alpha_t}\tau\nabla Q(x_1)\right).
    \label{eq:scf-target}
\end{equation}
By construction, whenever the expectation exists,
\begin{equation*}
    \E_{X_1\sim p_{1|t}^{\rho^v}(\cdot|x_t)}[\target_{v,\lambda,\zeta}(t,x_t,X_1)]
    =(\iter_{\lambda,\zeta}v)_t(x_t).
\end{equation*}

For the current network $v^\theta$, let $\rho^\theta:=\rho^{v^\theta}$. Generate $Y_1$ with the current flow and set $X_1:=Y_1\sim\rho^\theta$. Independently draw fresh source noise $X_0\sim p_0$ and $t\sim\U(0,1)$, then form $X_t=\alpha_tX_1+\beta_tX_0$. By Bayes' rule, the conditional density of $X_1$ given $(t,X_t=x_t)$ is $p_{1|t}^{\rho^\theta}(\cdot|x_t)$.

To seek a zero of the fixed-point residual in \eqref{eq:practical-fixed-point}, we use the sampled discrepancy as a correction signal, treating the target as fixed during differentiation. This gives the stop-gradient objective
\begin{equation*}
    \calL_{\lambda,\zeta}^{\mathrm{SCF}}(\theta)
    :=\E_{t\sim\U(0,1),\,X_1\sim\rho^{\sg(\theta)},\,X_t\sim p_{t|1}(\cdot|X_1)}
    \left[\norm[2]{v_t^\theta(X_t)-\sg(\target_{v^\theta,\lambda,\zeta}(t,X_t,X_1))}^2\right].
\end{equation*}
Here $\sg$ denotes the stop-gradient operator, which preserves the target value but blocks derivatives through all its terms.

It remains to specify the coefficients $\lambda_t$ and $\zeta_t$ in the SCF objective. A key consideration is the singular behavior of the score-based term in~\eqref{eq:scf-target} as $t\to0$. The coefficients must therefore cancel this singularity for numerical stability while satisfying~\eqref{eq:generalized-uniqueness-condition} to preserve fixed-point uniqueness. We provide concrete coefficient choices in Appendix~\ref{app:coefficient-proofs} and empirically demonstrate their effectiveness through ablation studies in Appendix~\ref{app:ablation}.

\subsection{Bounded Domains}
\label{sec:bounded-domains}

The preceding construction learns a sampler on $\R^d$. To accommodate bounded domains, such as the action spaces in many RL problems, we map an unconstrained latent variable into the desired domain and account for the change of variables in the target density. This allows us to retain the optimality conditions and training objective in latent space.

Consider a target $\pi(a)\propto e^{\tau Q(a)}$ on $(-1,1)^d$ and parameterize $a=\tanh x$ coordinatewise, with $x\in\R^d$. We use a tilde to denote the corresponding density in latent spaces. The change-of-variables formula gives
\begin{equation}
    \widetilde{\pi}(x)=\pi(\tanh x)\prod_{i=1}^d(1-\tanh^2x_i)
    \propto e^{\tau Q(\tanh x)}\prod_{i=1}^d(1-\tanh^2x_i).
    \label{eq:latent-density}
\end{equation}
Applying $\tanh$ ensures bounded outputs, but matching the intended density additionally requires the Jacobian factor above. The following proposition gives the resulting score and establishes the equivalence between the latent and bounded-domain sampling problems.

\begin{restatable}{proposition}{scfBoundedActions}
\label{prop:bounded-actions}
Let $Q\in C^1((-1,1)^d)$ and let $\pi$ be a probability density on $(-1,1)^d$ satisfying $\pi(a)\propto e^{\tau Q(a)}$. The latent target has score
\begin{equation}
    \nabla_x\log\widetilde{\pi}(x)=\tau\nabla_xQ(\tanh x)-2\tanh x.
    \label{eq:latent-score}
\end{equation}
Moreover, $Y_1\sim\widetilde{\pi}$ if and only if $\tanh Y_1\sim\pi$.
\end{restatable}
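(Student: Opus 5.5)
The plan is to obtain both claims from the one-dimensional change of variables $a=\tanh x$, applied coordinatewise. The map $T:\R^d\to(-1,1)^d$, $T(x)=(\tanh x_1,\dots,\tanh x_d)$, is a $C^\infty$ diffeomorphism. Its inverse is $\operatorname{artanh}$ coordinatewise, and its Jacobian is diagonal with entries $1-\tanh^2x_i>0$. Hence $\abs{\det DT(x)}=\prod_{i=1}^d(1-\tanh^2x_i)$, which justifies \eqref{eq:latent-density} as the density of $T^{-1}(A)$ when $A\sim\pi$. Write $\pi(a)=e^{\tau Q(a)}/Z$ with $Z=\int_{(-1,1)^d}e^{\tau Q}\,\dd a<\infty$. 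Then $\widetilde\pi(x)=Z^{-1}e^{\tau Q(\tanh x)}\prod_i(1-\tanh^2x_i)$ exactly, with the same constant $Z$. This $\widetilde\pi$ is a positive $C^1$ density on $\R^d$, and it integrates to one because $\int_{\R^d}\widetilde\pi\,\dd x=\int_{(-1,1)^d}\pi\,\dd a=1$ by the substitution $a=T(x)$.

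For the score \eqref{eq:latent-score}, I will take logarithms:
\begin{equation*}
\log\widetilde\pi(x)=-\log Z+\tau Q(T(x))+\sum_{i=1}^d\log(1-\tanh^2x_i).
\end{equation*}
The constant $Z$ drops out under differentiation, so the score is independent of the unknown normalizer. For the first term, differentiate $Q\circ T$ with the chain rule. Its $i$-th partial is $\tau\,\partial_iQ(\tanh x)\,(1-\tanh^2x_i)$. I read the notation $\nabla_xQ(\tanh x)$ in \eqref{eq:latent-score} as the gradient of the composite $x\mapsto Q(\tanh x)$, so this term is exactly $\tau\nabla_xQ(\tanh x)$. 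For the Jacobian term, use $\frac{\dd}{\dd x}\log(1-\tanh^2x)=\frac{-2\tanh x\,(1-\tanh^2x)}{1-\tanh^2x}=-2\tanh x$, which gives $-2\tanh x$ coordinatewise. Combining the two terms yields \eqref{eq:latent-score}.

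For the equivalence, I will use the fact that $T$ is a bijection with measurable inverse, so $\mathcal{L}(\tanh Y_1)=T_\#\mathcal{L}(Y_1)$ determines $\mathcal{L}(Y_1)=(T^{-1})_\#\mathcal{L}(\tanh Y_1)$. If $Y_1\sim\widetilde\pi$, then for any Borel set $A\subseteq(-1,1)^d$,
\begin{equation*}
P(\tanh Y_1\in A)=\int_{T^{-1}(A)}\widetilde\pi(x)\,\dd x=\int_A\pi(a)\,\dd a
\end{equation*}
by the change-of-variables formula, so $\tanh Y_1\sim\pi$. Conversely, if $\tanh Y_1\sim\pi$, the same computation read in reverse shows that $Y_1=T^{-1}(\tanh Y_1)$ has density $\widetilde\pi$.

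I expect no real obstacle in this argument. The one point needing care is the interpretation of $\nabla_xQ(\tanh x)$ as the gradient of the composite rather than $(\nabla Q)(\tanh x)$; otherwise the factor $(1-\tanh^2x_i)$ would have to appear explicitly. The only regularity needed is $Q\in C^1$, which makes $\log\widetilde\pi$ differentiable everywhere. No integrability beyond the finiteness of $Z$, which is implicit in $\pi$ being a probability density, is required.
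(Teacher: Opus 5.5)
Your proposal is correct and follows essentially the same route as the paper. Both arguments use the coordinatewise change of variables under $\tanh$ and take logarithms so that the normalizer drops out. Both apply the chain rule to $\tau Q(\tanh x)$, and the paper reads it the same way you do, as the gradient of the composite, $\tau\operatorname{diag}(1-\tanh^2x)\nabla_aQ(\tanh x)$. Both then use $\frac{\dd}{\dd x_i}\log(1-\tanh^2x_i)=-2\tanh x_i$ and the bijectivity of $\tanh$ to get the two-way distributional equivalence.
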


We therefore apply SCF in latent space with target $\widetilde{\pi}$, replacing $\tau\nabla Q$ in the optimality conditions and regression target with the latent score in \eqref{eq:latent-score}. Interpolation and training otherwise remain unchanged; during sampling, we apply $\tanh$ to the generated endpoint to obtain a sample in the original bounded domain.

\section{Applications to Generative Online Reinforcement Learning}
\label{sec:online-rl}

We apply SCF within an off-policy actor--critic framework. The overall procedure is summarized in Algorithm~\ref{alg:scf} in Appendix~\ref{app:experiment-details}. Without loss of generality, we take $\mathcal A=(-1,1)^d$ after affine rescaling of action bounds. The flow policy is parameterized by $v^\theta(\cdot,s)$, and $\pi^\theta(\cdot|s)$ denotes its induced action density. Sampling $a\sim\pi^\theta(\cdot|s)$ consists of solving
\begin{equation*}
    \frac{\dd Y_t}{\dd t}=v_t^\theta(Y_t,s),\qquad Y_0\sim p_0,
\end{equation*}
and returning $a=\tanh Y_1$.

\paragraph{Critic training.}
We maintain two critics $Q^{\omega_1}$ and $Q^{\omega_2}$, together with target networks parameterized by $\bar{\omega}_1$ and $\bar{\omega}_2$. Let $\mathcal D$ denote the replay buffer. Each critic, indexed by $i\in\{1,2\}$, minimizes
\begin{equation*}
    \calL_Q(\omega_i)
    :=\E_{(s,a,r,s')\sim\mathcal D,\,a'\sim\pi^{\theta}(\cdot|s')}
    \left[\left(Q^{\omega_i}(s,a)-\sg\left(r+\gamma\min_{j\in\{1,2\}}Q^{\bar{\omega}_j}(s',a')\right)\right)^2\right].
\end{equation*}

\paragraph{Actor training.}
Holding the state and critics fixed, SCF trains the flow policy toward the Boltzmann density $\pi^Q(\cdot|s)$ defined in \eqref{eq:boltzmann-policy}. We use $Q(s,a):=\min_{i\in\{1,2\}}Q^{\omega_i}(s,a)$ as the sampling potential.
Let $\widetilde{\pi}^{\theta}(\cdot|s)$ denote the density of the latent endpoint $Y_1$, so that $\pi^\theta(\cdot|s)=(\tanh)_\#\widetilde{\pi}^{\theta}(\cdot|s)$. The corresponding latent target $\widetilde{\pi}^{Q}(\cdot|s)$ is obtained from $\pi^Q(\cdot|s)$ through the change of variables in Section~\ref{sec:bounded-domains}, with score $\nabla_x\log\widetilde{\pi}^{Q}(x|s)=\tau\nabla_xQ(s,\tanh x)-2\tanh x$.
At each fixed state $s$, $\widetilde{\pi}^{Q}(\cdot|s)$ and $\widetilde{\pi}^{\theta}(\cdot|s)$ correspond, respectively, to the target density $\pi$ and generated density $\rho^\theta$ in the general formulation developed in the preceding sections.
The resulting actor loss is given by
\begin{equation*}
    \calL_{\lambda,\zeta}^{\mathrm{actor}}(\theta)
    :=\E_{\substack{s\sim\mathcal D,\,t\sim\U(0,1),\,X_1\sim\widetilde{\pi}^{\sg(\theta)}(\cdot|s),\,X_t\sim p_{t|1}(\cdot|X_1)}}
    \left[\norm[2]{v_t^\theta(X_t,s)-\sg(\target_{v^\theta,\lambda,\zeta}(t,X_t,X_1;s))}^2\right].
\end{equation*}

\section{Experiments}
\label{sec:experiments}

\subsection{Toy Example}
\label{sec:exp-toy}

We first validate SCF on a toy example, $\pi(x)\propto\exp(\tau Q(x))$, $x\in\R^2$, an equally weighted mixture of eight Gaussian components arranged on a circle. We compare with four recent generative samplers for unnormalized densities: Adjoint Sampling (AS)~\citep{havens2025adjoint}, Adjoint Schr\"odinger Bridge Sampler (ASBS)~\citep{liu2025adjoint}, Bridge Matching Sampler (BMS)~\citep{blessing2026bridge}, and Flow Sampling (FS)~\citep{havens2026flow}.
The results are summarized in Figure~\ref{fig:toy}. Experimental details are provided in Appendix~\ref{app:toy-example}. SCF achieves the lowest discrepancy from ground-truth samples while being the most efficient in terms of training and inference.

\begin{figure}[t]
    \centering
    \includegraphics[width=\textwidth]{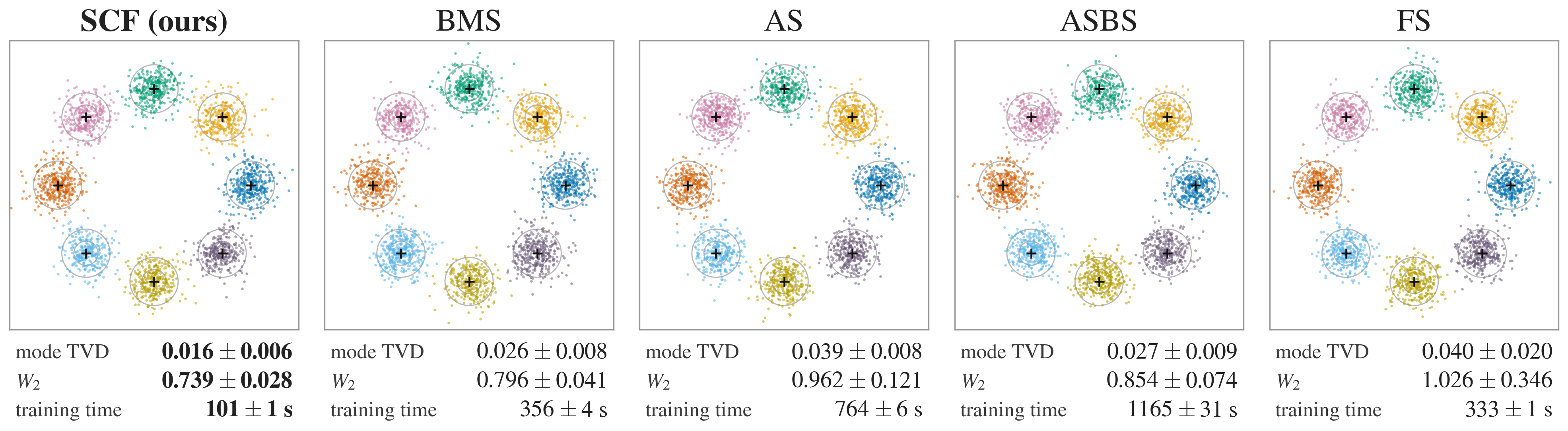}
    \caption{Visualization of samples from the five trained models on the eight-Gaussian target. SCF achieves the lowest discrepancy from ground-truth samples while being the cheapest to train and to sample from. The detailed descriptions are provided in Appendix~\ref{app:toy-example}.}
    \label{fig:toy}
\end{figure}

\subsection{RL Tasks}
\label{sec:exp-rl}

We evaluate SCF on two sets of tasks: widely used continuous-control tasks from the DeepMind Control Suite~\citep{tassa2018deepmind} and challenging humanoid-control tasks from HumanoidBench~\citep{sferrazza2024humanoidbench}. We compare SCF with soft actor-critic (SAC)~\citep{haarnoja2018soft}, a standard maximum-entropy RL algorithm with Gaussian policies, and eight strong generative-policy baselines spanning the four categories introduced in Section~\ref{sec:introduction}: DIME~\citep{celik2025dime}, Q-score matching (QSM)~\citep{psenka2024learning}, QFlex~\citep{wei2026scalable}, Q-weighted variational policy optimization (QVPO)~\citep{ding2024diffusion}, MaxEntDP~\citep{dong2025maximum}, soft diffusion actor-critic (SDAC)~\citep{ma2025efficient}, diffusion Q-sampling (DQS)~\citep{jain2025sampling}, and reverse flow matching (RFM)~\citep{li2026reverse}. Experimental details are provided in Appendix~\ref{app:rl-tasks}.

The training curves for HumanoidBench tasks are shown in Figure~\ref{fig:hb}. The training curves for DeepMind Control Suite tasks are deferred to Figure~\ref{fig:dmc} in Appendix~\ref{app:rl-tasks}. The results demonstrate that SCF consistently ranks among the top-performing methods across all tasks. Table~\ref{tab:update-time} in Appendix~\ref{app:rl-tasks} reports the wall-clock time per training iteration for each algorithm, with SCF requiring less time than all generative-policy baselines.

\begin{figure}[t]
    \centering
    \includegraphics[width=0.8\textwidth]{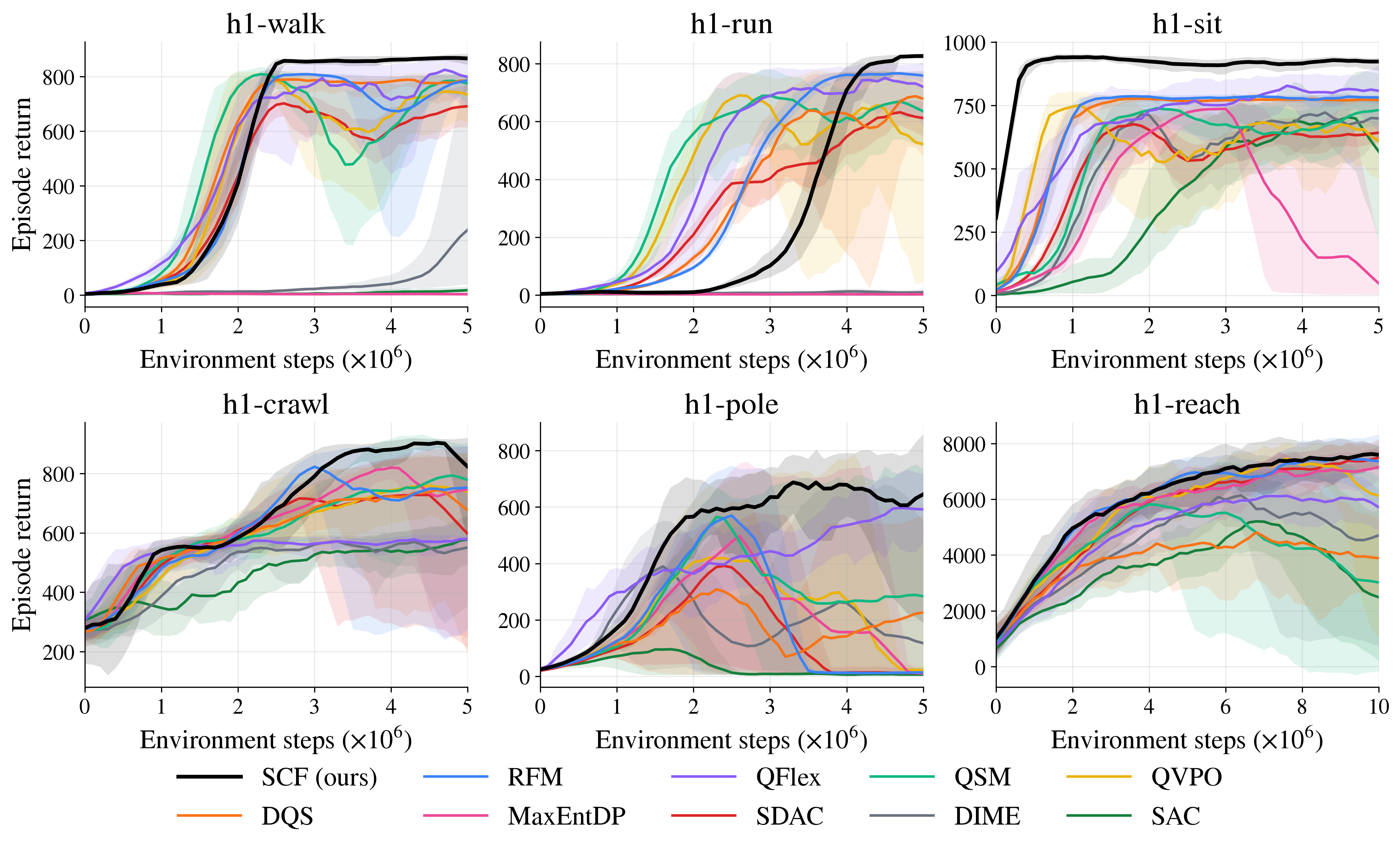}
    \caption{Training curves on six tasks from HumanoidBench. The solid lines correspond to the mean across five seeds, and the shaded regions represent minimum-to-maximum range.}
    \label{fig:hb}
\end{figure}

\section{Conclusion}
\label{sec:conclusion}

In this paper, we proposed Score-Calibrated Flow (SCF), a simple and efficient method for learning to sample from unnormalized densities with applications to generative online RL. By shifting from target posterior-mean estimation to self-consistency enforcement, we derived score-calibrated optimality conditions for both terminal densities and velocity fields. Their conditional-expectation structure yields a stop-gradient regression objective using self-generated samples. Experiments demonstrated strong performance and computational efficiency of SCF.

\bibliography{reference}
\bibliographystyle{iclr2027_conference}

\clearpage
\appendix

\section{Related Work}
\label{app:related_work}

\subsection{Generative Online Reinforcement Learning}
\label{app:related-generative-rl}

We organize existing methods by how they construct supervision signals for actor training.

\paragraph{Direct backpropagation.}
DACER~\citep{wang2024diffusion}, FlowRL~\citep{lv2025flow}, and DIME~\citep{celik2025dime} differentiate critic-based objectives through the sampling trajectory, incorporating additional approximations and regularization to make policy optimization more stable. Specifically, DACER estimates policy entropy by fitting Gaussian mixture models to sampled actions, FlowRL uses a flow-matching surrogate for Wasserstein regularization toward high-value replay-buffer behaviors, and DIME optimizes a tractable lower bound on the maximum-entropy objective. Despite these measures, backpropagation through the full sampling trajectory incurs substantial computational and memory costs and poses a nonnegligible risk of training instability.

\paragraph{Value-guided sampling.}
QSM~\citep{psenka2024learning} trains a diffusion score network using critic gradients at noisy actions, while QFlex~\citep{wei2026scalable} performs critic-gradient ascent from a learned source and fits a flow model to the resulting endpoints. Unlike our method, QSM and QFlex do not establish an exact characterization that uniquely identifies both the prescribed Boltzmann density and its velocity from CFM construction. Their guidance relies on repeated action-refinement steps, and evaluating the critic at noisy actions can expose it to inputs where its estimates are unreliable. We instead derive regression targets from score-calibrated optimality conditions and evaluate the critic gradient at clean actions. Each target requires only one endpoint-score evaluation, and the continuous-time formulation is independent of the number of sampling steps, without requiring an additional multi-step guidance procedure.

\paragraph{Weighted regression.}
QVPO~\citep{ding2024diffusion} trains a diffusion policy with Q-dependent weights and entropy regularization. Related approaches include reward-weighted flow matching~\citep{fan2025online} and diffusion policy mirror descent~\citep{ma2025efficient}. These methods use importance sampling to construct weighted regression objectives and therefore remain susceptible to high-variance training signals when the importance weights become concentrated. Generating and evaluating multiple candidate actions further increases computational cost. Moreover, their induced training densities depend on the proposal distribution and weighting rule and generally differ from the prescribed Boltzmann target $\pi^Q$, so even exact optimization may not recover the desired policy.

\paragraph{Boltzmann targeting.}
MaxEntDP~\citep{dong2025maximum} and SDAC~\citep{ma2025efficient} use Q-weighted noise signals to train noise-prediction and score networks, respectively. DQS~\citep{jain2025sampling} adapts iterated denoising energy matching~\citep{akhoundsadegh2024iterated} to train a score network using Q-weighted averages of critic gradients. RFM~\citep{li2026reverse} identifies these noise-expectation and gradient-expectation methods as instances of a shared posterior-mean estimation problem, and provides a principled way to combine both families through Stein control variates for variance reduction. These formulations target the Boltzmann density at the ideal population optimum, but their practical estimators rely on importance sampling, incurring computational overhead and estimator variance. SCF instead enforces score-calibrated self-consistency through regression on self-generated endpoints, without explicitly estimating the target posterior mean.

\subsection{Learning to Sample from Unnormalized Densities}
\label{app:related-unnormalized-sampling}

Learning generative samplers from unnormalized densities has been extensively studied, with representative approaches based on normalizing flows~\citep{noe2019boltzmann,papamakarios2021normalizing} and diffusion models~\citep{zhang2022path,vargas2023denoising,akhoundsadegh2024iterated}. We focus on recent matching-based methods that avoid both importance-weighted training targets and backpropagation through sampling trajectories.

Adjoint Sampling (AS)~\citep{havens2025adjoint} derives an endpoint-gradient regression objective from stochastic optimal control. Adjoint Schr\"odinger Bridge Sampler (ASBS)~\citep{liu2025adjoint} extends this construction beyond memoryless reference processes by alternating between learning a diffusion drift and a terminal corrector. Bridge Matching Sampler (BMS)~\citep{blessing2026bridge} derives diffusion-matching targets from a fixed-point iteration perspective. Flow Sampling (FS)~\citep{havens2026flow} constructs regression targets by combining the conditional velocity with an endpoint-score correction.

SCF grounds its training rule in optimality conditions that uniquely characterize the target density and ideal flow velocity under mild regularity requirements. BMS and FS adopt a fixed-point iteration perspective on training with self-generated samples, but do not establish that the desired sampler is the unique fixed point~\citep{blessing2026bridge,havens2026flow}. AS establishes fixed-point uniqueness only under the additional assumption of reciprocal-projection invariance~\citep{havens2025adjoint}. This is closer in spirit to our density optimality condition, where the flow is prescribed by the candidate density. Our velocity optimality condition goes further: it rules out spurious fixed points without presupposing a density--flow consistency. ASBS establishes convergence of its alternating updates under the assumption that each stage reaches its critical point, but does not establish uniqueness of the joint critical points of its coupled drift--corrector matching equations~\citep{liu2025adjoint}. SCF instead derives necessary and sufficient conditions directly for the terminal density and trainable velocity. Their unique solutions are the target density $\pi$ and the ideal velocity $v^\pi$ that CFM would learn from target samples. Moreover, the four methods above learn the drifts of stochastic differential equations (SDEs). Their sampling procedures rely on stochastic simulation, which may require a large number of function evaluations (NFEs), increasing computational costs during both training and inference. FS illustrates a structural limitation of this formulation: its target-score correction is proportional to the squared diffusion coefficient, so setting this coefficient to zero eliminates explicit target-score supervision~\citep{havens2026flow}. The remaining objective reduces to CFM on self-generated endpoints, which reproduces the current density without identifying the prescribed target. Thus, simply removing the diffusion noise does not yield a training objective that ensures the learned velocity field generates the prescribed target density through ODE sampling. In contrast, SCF derives its velocity regression objective from score-calibrated optimality conditions, retaining target-score supervision without relying on stochastic sampling dynamics.

\section{Proofs of Theoretical Results}
\label{app:proofs}

We state the mild regularity conditions in Appendix~\ref{app:regularity} and provide the proofs in the subsequent subsections. For convenience, we restate each proposition or theorem from the main text before its proof.

\subsection{Regularity Conditions}
\label{app:regularity}

The schedules satisfy the assumptions in Section~\ref{sec:cfm-background} and $\kappa_t>0$ from \eqref{eq:kappa}. Identities with divisions by $\alpha_t$ or $\beta_t$ are stated for $t\in(0,1)$. Velocity equalities are understood on $(0,1)\times\R^d$.

\paragraph{Score identities.}
For every density $\rho$ to which Proposition~\ref{prop:score-velocity} is applied, assume positivity, normalization, and $C^1$ regularity. For every $t\in(0,1)$ and $x_t\in\R^d$, assume
\begin{equation}
    \E_{X_1\sim p_{1|t}^\rho(\cdot|x_t)}
    [\norm[2]{X_1}]<\infty,\qquad
    \E_{X_1\sim p_{1|t}^\rho(\cdot|x_t)}
    [\norm[2]{\nabla\log\rho(X_1)}]<\infty.
    \label{eq:posterior-moment-condition}
\end{equation}
For every density $\rho$ to which Theorem~\ref{thm:target-characterization} is applied, additionally assume, for every $t\in(0,1)$ and $x_t\in\R^d$, that
\begin{equation}
    \E_{X_1\sim p_{1|t}^\rho(\cdot|x_t)}
    [\norm[2]{\nabla Q(X_1)}]<\infty.
    \label{eq:nabla-Q-integrability-on-rho-posterior}
\end{equation}

\paragraph{Velocity fields.}
We call a velocity field $v$ \emph{admissible} if its source-initialized ODE
\begin{equation*}
    \frac{\dd}{\dd t}\flow_{0\to t}^v(x_0)=v_t(\flow_{0\to t}^v(x_0)),
    \qquad \flow_{0\to0}^v(x_0)=x_0,
\end{equation*}
has a unique solution on $[0,1]$ with measurable flow maps, its terminal pushforward has a positive $C^1$ density $\rho^v$, and the conditional moments defining the score-calibration operators are finite.

For a population CFM field $v^\rho$, assume that its ODE transports $p_0$ along $p_t^\rho$ to $\rho$. These requirements also apply to $v^\pi$ when asserting existence of the target flow.

For every admissible field $v$ to which Theorem~\ref{thm:velocity-optimality} is applied, assume that, for some $t_0\in(0,1)$, $\delta>0$, and nonnegative measurable functions $a,b$ on $(0,t_0)$,
\begin{equation}
    \norm[2]{v_t(x_t)}\le a_t+b_t\norm[2]{x_t},
    \qquad \frac{\alpha_t}{\beta_t^2}a_t\in L^1(0,t_0),\qquad b\in L^1(0,t_0),
    \label{eq:source-growth}
\end{equation}
for almost every $t\in(0,t_0)$ and $x_t$ with $\norm[2]{x_t}\le \frac{\beta_t^2}{\alpha_t}\delta$. Additionally assume that, for some $\varepsilon>0$,
\begin{equation}
    \int_{\R^d}e^{\varepsilon\norm[2]{x}}\rho^v(x)
    \bigl(1+\norm[2]{\tau\nabla Q(x)}\bigr)\,\dd x<\infty.
    \label{eq:weighted-exponential-moment}
\end{equation}

\subsection{Density Optimality Conditions}
\label{app:target-proofs}

\begin{proposition}\label{prop:conditional_score_zero_mean}
    Let $\rho$ be a positive, normalized $C^1$ density satisfying the condition \eqref{eq:posterior-moment-condition}. For every $t\in(0,1)$ and $x_t\in\R^d$,
    \[
    \int_{\R^d}\nabla_{x_1}p_{1|t}^\rho(x_1|x_t)\,\dd x_1=0.
    \]
\end{proposition}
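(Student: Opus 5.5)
We have to show that the integral over $x_1\in\R^d$ of the gradient of the posterior density $p_{1|t}^\rho(x_1|x_t)$ vanishes, for fixed $t\in(0,1)$ and $x_t$. The plan is to write the gradient explicitly, show it is absolutely integrable using \eqref{eq:posterior-moment-condition}, and then apply a truncation/divergence-theorem argument in which the boundary terms vanish.

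\textbf{Step 1: explicit gradient.} Since $\rho$ is positive and $C^1$ and the Gaussian kernel $p_{t|1}(x_t|x_1)$ is smooth in $x_1$, the posterior $p_{1|t}^\rho(x_1|x_t)=\rho(x_1)p_{t|1}(x_t|x_1)/p_t^\rho(x_t)$ is $C^1$ in $x_1$. Here $p_t^\rho(x_t)\in(0,\infty)$, because the kernel is bounded and $\rho$ is positive and normalized. Differentiating gives
\[
\nabla_{x_1}p_{1|t}^\rho(x_1|x_t)=p_{1|t}^\rho(x_1|x_t)\Bigl(\nabla\log\rho(x_1)+\tfrac{\alpha_t}{\beta_t^2}(x_t-\alpha_tx_1)\Bigr).
\]

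\textbf{Step 2: integrability.} By \eqref{eq:posterior-moment-condition}, both $\E[\norm[2]{\nabla\log\rho(X_1)}]$ and $\E[\norm[2]{X_1}]$ are finite under $p_{1|t}^\rho(\cdot|x_t)$. Hence $g:=\nabla_{x_1}p_{1|t}^\rho(\cdot|x_t)\in L^1(\R^d)$, and the integral is well defined. Write $f:=p_{1|t}^\rho(\cdot|x_t)$, which is nonnegative, $C^1$, and has integral one.

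\textbf{Step 3: vanishing of the integral.} This is the main obstacle, because we have no pointwise decay of $\rho$ at infinity; we only know that $f$ and $\nabla f$ are integrable. I would use a smooth cutoff $\chi_R(x)=\chi(x/R)$ with $\chi\in C_c^\infty$, $\chi=1$ on the unit ball and $0\le\chi\le1$.

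Since $f\chi_R$ is $C^1$ with compact support, integrating by parts gives
\[
\int\partial_i f\,\chi_R\,\dd x=-\int f\,\partial_i\chi_R\,\dd x.
\]
The right side is bounded in absolute value by $\norm[\infty]{\nabla\chi}R^{-1}\int f=O(1/R)$. As $R\to\infty$, the left side converges to $\int\partial_i f$ by dominated convergence, since $|\partial_i f\,\chi_R|\le|\partial_i f|\in L^1$. Therefore $\int\partial_i f=0$ for each coordinate $i$, which proves the claim.

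This cutoff argument avoids needing boundary decay of $f$. The only inputs are $f\in L^1$, which holds because $f$ is a probability density, and $\nabla f\in L^1$, which is Step 2.
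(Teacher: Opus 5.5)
Your proposal is correct and follows essentially the same route as the paper. Both arguments take the explicit posterior-gradient formula from Bayes' rule, get $L^1$-integrability of $\nabla_{x_1}p_{1|t}^\rho(\cdot|x_t)$ from \eqref{eq:posterior-moment-condition}, integrate by parts against a rescaled smooth cutoff with an $O(1/R)$ bound on $\int p_{1|t}^\rho\nabla\varphi_R$, and finish by dominated convergence.
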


\begin{proof}
    Fix $t\in(0,1)$ and $x_t\in\R^d$. For every $R>0$, denote $B_R:=\{x\in\R^d:\norm[2]{x}\leq R\}$ by the closed ball of radius $R$. Choose a smooth cutoff function $\varphi_1\in C^\infty(\R^d)$ such that
\[
0\leq\varphi_1\leq 1,\qquad \varphi_1\equiv 1\text{ on }B_1,\qquad \supp\,\varphi_1\subset B_2,\qquad C:=\sup_{x\in\R^d}\norm[2]{\nabla\varphi_1(x)}<\infty,
\]
where $\supp$ denotes the support set of a function. For every $R>0$, define $\varphi_R(x):=\varphi_1\left(\frac{x}{R}\right)$. Then
\[
0\leq\varphi_R\leq 1,\qquad \varphi_R\equiv 1\text{ on }B_R,\qquad \supp\,\varphi_R\subset B_{2R},\qquad \sup_{x\in\R^d}\norm[2]{\nabla\varphi_R(x)}\leq \frac{C}{R}.
\]
Since $\varphi_R$ has compact support, integrating by parts yields
\[
    \int_{\R^d}\varphi_R(x_1)\nabla_{x_1}p_{1|t}^\rho(x_1|x_t)\,\dd x_1=-\int_{\R^d}p_{1|t}^\rho(x_1|x_t)\nabla\varphi_R(x_1)\,\dd x_1.
\]
For every $R>0$, the integrand on the left-hand side satisfies
\begin{align*}
    &\norm[2]{\varphi_R(x_1)\nabla_{x_1}p_{1|t}^\rho(x_1|x_t)}\leq \norm[2]{\nabla_{x_1} p_{1|t}^\rho(x_1|x_t)}
    \quad\text{for every }x_1.
\end{align*}
By Bayes' rule and differentiation of the Gaussian likelihood,
\[
    \nabla_{x_1}p_{1|t}^\rho(x_1|x_t)
    =
    p_{1|t}^\rho(x_1|x_t)
    \left[
        \nabla\log\rho(x_1)
        +\frac{\alpha_t}{\beta_t^2}(x_t-\alpha_t x_1)
    \right].
\]
Consequently, \eqref{eq:posterior-moment-condition} gives
\begin{align*}
\int_{\R^d}\norm[2]{\nabla_{x_1}p_{1|t}^\rho(x_1|x_t)}\,\dd x_1
=&\E_{X_1\sim p_{1|t}^\rho(\cdot|x_t)}\left[\norm[2]{\nabla\log\rho(X_1)+\frac{\alpha_t}{\beta_t^2}(x_t-\alpha_tX_1)}\right]\\
\leq&\E_{X_1\sim p_{1|t}^\rho(\cdot|x_t)}\left[\norm[2]{\nabla\log\rho(X_1)}+
\frac{\alpha_t^2}{\beta_t^2}\norm[2]{X_1}\right]+\frac{\alpha_t}{\beta_t^2}\norm[2]{x_t}\\
<&\infty.
\end{align*}
Since
\[
    \norm[2]{\int_{\R^d}p_{1|t}^\rho(x_1|x_t)\nabla\varphi_R(x_1)\,\dd x_1}
    \leq \frac{C}{R}\abs{\int_{\R^d}p_{1|t}^\rho(x_1|x_t)\,\dd x_1}=\frac{C}{R}\to 0\quad\text{as }R\to\infty,
\]
Since $\varphi_R(x_1)\to1$ for every $x_1$ and the preceding integrable bound is independent of $R$, Lebesgue's dominated convergence theorem \citep[Theorem~1.34]{rudin1987real}, applied componentwise, gives
\begin{align*}
    \int_{\R^d}\nabla_{x_1}p_{1|t}^\rho(x_1|x_t)\,\dd x_1
    =&\lim_{R\to\infty}\int_{\R^d}\varphi_R(x_1)\nabla_{x_1}p_{1|t}^\rho(x_1|x_t)\,\dd x_1\\
    =&-\lim_{R\to\infty}\int_{\R^d}p_{1|t}^\rho(x_1|x_t)\nabla\varphi_R(x_1)\,\dd x_1\\
    =&0.
\end{align*}
This proves the proposition.
\end{proof}

\restateproposition{\scfScoreVelocity}

\begin{proof}
Fix $t\in(0,1)$ and $x_t\in\R^d$. By Bayes' rule,
\begin{equation*}
    \log p_{1|t}^\rho(x_1|x_t)
    =\log\rho(x_1)+\log p_{t|1}(x_t|x_1)-\log p_t^\rho(x_t).
\end{equation*}
The last term is independent of $x_1$. Differentiating the Gaussian conditional density therefore gives
\begin{equation}
    \nabla_{x_1}\log p_{1|t}^\rho(x_1|x_t)
    =\nabla\log\rho(x_1)+\frac{\alpha_t}{\beta_t^2}(x_t-\alpha_tx_1).
    \label{eq:proof-posterior-score}
\end{equation}
Proposition~\ref{prop:conditional_score_zero_mean} implies
\[
    \E_{X_1\sim p_{1|t}^\rho(\cdot|x_t)}\left[\nabla_{x_1}\log p_{1|t}^\rho(X_1|x_t)\right]
    =\int_{\R^d}\nabla_{x_1}p_{1|t}^\rho(x_1|x_t)\,\dd x_1
    =0.
\]
Consequently, taking the expectation of \eqref{eq:proof-posterior-score} under the same posterior $p_{1|t}^\rho(\cdot|x_t)$ gives
\begin{equation*}
    \E_{X_1\sim p_{1|t}^\rho(\cdot|x_t)}[\nabla\log\rho(X_1)]
    =-\frac{\alpha_t}{\beta_t^2}x_t
    +\frac{\alpha_t^2}{\beta_t^2}\E_{X_1\sim p_{1|t}^\rho(\cdot|x_t)}[X_1].
\end{equation*}
Then, the conditional-velocity formula and the definition of $\kappa_t$ imply
\begin{align*}
    v_t^\rho(x_t)
    =&\frac{\dot\beta_t}{\beta_t}x_t
    +\frac{\alpha_t\kappa_t}{\beta_t^2}\E_{X_1\sim p_{1|t}^\rho(\cdot|x_t)}[X_1]\\
    =&\left(\frac{\dot\beta_t}{\beta_t}+\frac{\kappa_t}{\beta_t^2}\right)x_t
    +\frac{\kappa_t}{\alpha_t}\E_{X_1\sim p_{1|t}^\rho(\cdot|x_t)}[\nabla\log\rho(X_1)].\notag
\end{align*}
Substituting $\frac{\dot\beta_t}{\beta_t}+\frac{\kappa_t}{\beta_t^2}=\frac{\dot\alpha_t}{\alpha_t}$ yields \eqref{eq:score-velocity}.
\end{proof}

\begin{proposition}
\label{prop:exponential-average-uniqueness}
Let $h\in L^1(\R^d;\R^d)$ satisfy
\[
    \int_{\R^d}e^{\omega^\top x}h(x)\,\dd x=0,
    \qquad
    \int_{\R^d}e^{\omega^\top x}\norm[2]{h(x)}\,\dd x<\infty
\]
for every $\omega\in\R^d$. Then $h=0$ almost everywhere.
\end{proposition}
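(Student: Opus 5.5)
We argue componentwise, so it suffices to treat a scalar function $h\in L^1(\R^d)$ with $\int e^{\omega^\top x}h(x)\,\dd x=0$ and $\int e^{\omega^\top x}\abs{h(x)}\,\dd x<\infty$ for every $\omega\in\R^d$. The plan is to extend the two-sided Laplace transform $F(z):=\int_{\R^d}e^{z^\top x}h(x)\,\dd x$ to complex arguments $z=\omega+i\xi$ with $\omega,\xi\in\R^d$. We then show it is entire and vanishes on $\R^d$, conclude that it vanishes identically, and read off that the Fourier transform of $h$ is zero.

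\textbf{Step 1 (well-definedness and holomorphy).} For $z\in\C^d$ we have $\abs{e^{z^\top x}h(x)}=e^{(\Re z)^\top x}\abs{h(x)}$, which is integrable by hypothesis, so $F$ is defined on all of $\C^d$. To show $F$ is holomorphic in each variable $z_j$ separately, we differentiate under the integral sign. The derivative integrand is $x_je^{z^\top x}h(x)$. For $\Re z$ in a compact set $K$ we bound it by $\abs{x_j}e^{(\Re z)^\top x}\abs{h(x)}$. Since $\abs{x_j}\le C_\eta\bigl(e^{\eta x_j}+e^{-\eta x_j}\bigr)$ for any $\eta>0$, this is dominated by a finite sum of terms $e^{\omega'^\top x}\abs{h(x)}$ over finitely many $\omega'$ obtained from the corners of a box containing $K$, enlarged by $\pm\eta e_j$. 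Convexity of $\omega\mapsto e^{\omega^\top x}$ bounds $e^{\omega^\top x}$ for $\omega$ in the box by the sum over its corners, so the domination is uniform on $K$. Dominated convergence, or Morera's theorem combined with Fubini, then gives separate holomorphy. Separate holomorphy together with continuity, which follows by the same domination, gives joint holomorphy (Osgood's lemma), so $F$ is entire on $\C^d$.

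\textbf{Step 2 (identity theorem).} The hypothesis states $F\equiv0$ on $\R^d\subset\C^d$. We fix all but one coordinate at real values; the resulting one-variable entire function vanishes on $\R$, hence vanishes on $\C$. Inducting over coordinates, at each stage holding the already-complexified coordinates fixed and treating the next one, shows $F\equiv0$ on $\C^d$. In particular $F(i\xi)=\int e^{i\xi^\top x}h(x)\,\dd x=0$ for all $\xi\in\R^d$. Up to sign convention, this is the Fourier transform of $h\in L^1$.

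\textbf{Step 3 (Fourier uniqueness).} By injectivity of the Fourier transform on $L^1(\R^d)$, for instance \citep[Theorem~9.12]{rudin1987real}, $h=0$ almost everywhere. Applying this to each component of the vector-valued $h$ gives the claim.

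The only delicate point is the uniform domination in Step 1 that justifies holomorphy. The rest is standard complex analysis. Alternatively, Step 1 can be avoided for the holomorphy-in-strips argument by only using $\Re z$ in a neighborhood of $0$, but the full hypothesis makes the global entire extension cleanest.
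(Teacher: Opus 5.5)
Your proposal is correct and follows essentially the same route as the paper: continue the two-sided Laplace transform holomorphically, use the identity theorem to pass from real to imaginary arguments, and conclude with Fourier injectivity on $L^1(\R^d)$. The paper is lighter in one respect. It fixes $\omega$ and continues only along the complex line $z\mapsto z\omega$, so $F_\omega(-\mathrm{i})$ is directly the Fourier transform at $\omega$, and it needs only the bound $e^{R\abs{\omega^\top x}}\le e^{R\omega^\top x}+e^{-R\omega^\top x}$ and a one-variable identity theorem. Your coordinate-by-coordinate induction in $\C^d$ also works, but it only ever uses separate holomorphy in each variable, so the appeal to continuity and Osgood's lemma is unnecessary.
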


\begin{proof}
By the injectivity of the Fourier transform on $L^1(\R^d)$ \citep[Corollary~8.27]{folland1999real}, applied componentwise, it suffices to show that
$$
    \Fourier[h](\omega)
    :=
    \int_{\R^d}e^{-\mathrm{i}\omega^\top x}h(x)\,\dd x
$$
is zero for every $\omega\in\R^d$.

Fix $\omega\in\R^d$ and define $F_\omega:\C\to\C^d$ by
\[
    F_\omega(z):=\int_{\R^d}e^{z\omega^\top x}h(x)\,\dd x.
\]
This integral is well-defined for every $z\in\C$, since
\[
    \int_{\R^d}\norm[2]{e^{z\omega^\top x}h(x)}\,\dd x
    =
    \int_{\R^d}e^{(\Re z)\omega^\top x}\norm[2]{h(x)}\,\dd x
    <\infty.
\]
By assumption, $F_\omega(z)=0$ for every $z\in\R$, while
\[
    \Fourier[h](\omega)
    =
    F_\omega(-\mathrm{i}).
\]
It therefore remains to verify that $F_\omega$ is entire componentwise: the identity theorem will then extend its vanishing on the real axis to all of $\C$, giving $F_\omega(-\mathrm{i})=0$.

To establish this, observe that for every $R>0$,
\begin{align*}
    \int_{\R^d}\sup_{\abs{z}\le R}\norm[2]{e^{z\omega^\top x}h(x)}\,\dd x
    &\le \int_{\R^d}e^{R\abs{\omega^\top x}}\norm[2]{h(x)}\,\dd x\\
    &\le \int_{\R^d}\left(e^{R\omega^\top x}+e^{-R\omega^\top x}\right)\norm[2]{h(x)}\,\dd x
    <\infty.
\end{align*}
The integrand is measurable in $x$ and entire in $z$ for each $x$. Hence the locally dominated form of the holomorphic parameter-integral theorem, a consequence of \citet[Chapter~10, Exercise~16]{rudin1987real}, implies that $F_\omega$ is holomorphic componentwise on $\abs{z}<R$. Since $R>0$ is arbitrary, $F_\omega$ is entire componentwise.

Then, applying the identity theorem componentwise gives $F_\omega\equiv0$ on $\C$. Thus, $\Fourier[h](\omega)=F_\omega(-\mathrm{i})=0$. Since $\omega\in\R^d$ was arbitrary, Fourier injectivity yields $h=0$ almost everywhere.
\end{proof}

\restatetheorem{\scfTargetCharacterization}

\begin{proof}
    Suppose $\rho=\pi$, then $\nabla\log\rho=\tau\nabla Q$; hence, \eqref{eq:nabla-Q-integrability-on-rho-posterior} and Proposition~\ref{prop:score-velocity} give \eqref{eq:calibration-equation} for every $t\in(0,1)$. The implication from every interior time to some interior time is immediate.

    Suppose \eqref{eq:calibration-equation} holds for some $t\in(0,1)$. Since $\frac{\kappa_t}{\alpha_t}>0$, Proposition~\ref{prop:score-velocity} gives
\begin{equation*}
    \E_{X_1\sim p_{1|t}^\rho(\cdot|x_t)}[\nabla\log\rho(X_1)-\tau\nabla Q(X_1)]=0.
\end{equation*}
Since $p_t^\rho(x_t)>0$, expanding the posterior and multiplying by this marginal density yields
\begin{equation}
    \int_{\R^d}p_{t|1}(x_t|x_1)\rho(x_1)(\nabla\log\rho(x_1)-\tau\nabla Q(x_1))\,\dd x_1=0\quad\text{for every }x_t\in\R^d.
    \label{eq:proof-gaussian-integral}
\end{equation}

We now show that \eqref{eq:proof-gaussian-integral} forces the candidate and target scores to coincide. Define
\[
h_t(x_1):=\rho(x_1)(\nabla\log\rho(x_1)-\tau\nabla Q(x_1))\exp\left(-\frac{\alpha_t^2}{2\beta_t^2}\norm[2]{x_1}^2\right).
\]
Since 
$$p_{t|1}(x_t|x_1)=\Normal(x_t;\alpha_tx_1,\beta_t^2I)\propto\exp\left(-\frac{1}{2\beta_t^2}\norm[2]{x_t-\alpha_tx_1}^2\right),$$
\eqref{eq:proof-gaussian-integral} becomes
\[
    \int_{\R^d}h_t(x_1)\exp\left(\frac{\alpha_t}{\beta_t^2}x_t^\top x_1\right)\,\dd x_1=0\quad\text{for every }x_t\in\R^d.
\]
Then, \eqref{eq:posterior-moment-condition} and \eqref{eq:nabla-Q-integrability-on-rho-posterior} imply that
\[
    \int_{\R^d}\norm[2]{h_t(x_1)}\exp\left(\frac{\alpha_t}{\beta_t^2}x_t^\top x_1\right)\,\dd x_1<\infty\quad\text{for every }x_t\in\R^d.
\] 
Taking $x_t=0$ in the preceding integrability bound gives $h_t\in L^1(\R^d;\R^d)$. Moreover, $\frac{\alpha_t}{\beta_t^2}x_t$ ranges over $\R^d$ as $x_t$ varies. Therefore, Proposition~\ref{prop:exponential-average-uniqueness} gives
\[
h_t(x_1)=0\quad\text{for a.e. }x_1.
\]
By the positivity of $\rho$ and the continuity of $\nabla\log\rho$, $\nabla Q$, this implies
\[
\nabla\log\rho(x_1)=\tau\nabla Q(x_1)=\nabla\log\pi(x_1)\quad\text{for every }x_1\in\R^d.
\]
The normalization of $\rho$ and $\pi$ then gives $\rho=\pi$.
\end{proof}

\restateproposition{\scfAffineCalibration}

\begin{proof}
    By \eqref{eq:population-velocity} and $\lambda_t\neq0$, \eqref{eq:generalized-optimality} is equivalent to \eqref{eq:calibration-equation}. Therefore, this proposition is a straightforward conclusion of Theorem~\ref{thm:target-characterization}.
\end{proof}

\subsection{Velocity Optimality Conditions}
\label{app:velocity-proofs}

\begin{proposition}
\label{prop:exponential-tilt-identification}
Let $\pi$ be the target density in \eqref{eq:target-density}, and let $\rho$ be a positive, normalized $C^1$ density satisfying
\begin{equation}\label{eq:exponential-zero-assumption}
    \int_{\R^d}e^{\omega^\top x}\rho(x)\left(\tau\nabla Q(x)+\omega\right)\,\dd x=0
\end{equation}
for every $\omega$ in a neighborhood of $0\in\R^d$, and
\begin{equation}\label{eq:exponential-moment-assumption}
    \int_{\R^d}e^{\varepsilon\norm[2]{x}}\rho(x)\left(\norm[2]{\tau\nabla Q(x)}+1\right)\,\dd x<\infty
\end{equation}
for some $\varepsilon>0$. Then $\rho=\pi$.
\end{proposition}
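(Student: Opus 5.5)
The identity $\tau\nabla Q+\omega=\nabla\log\pi+\omega$ suggests an integration by parts: the condition \eqref{eq:exponential-zero-assumption} says that the tilted density $e^{\omega^\top x}\rho$ integrates the score of $e^{\omega^\top x}\pi$ to zero. I would first rewrite the integrand. Since $\nabla\bigl(e^{\omega^\top x}\rho\bigr)=e^{\omega^\top x}\rho(\nabla\log\rho+\omega)$, the left side of \eqref{eq:exponential-zero-assumption} equals
\begin{equation*}
\int_{\R^d}\nabla\bigl(e^{\omega^\top x}\rho(x)\bigr)\,\dd x+\int_{\R^d}e^{\omega^\top x}\rho(x)\bigl(\tau\nabla Q(x)-\nabla\log\rho(x)\bigr)\,\dd x .
\end{equation*}
The plan is to show the first term vanishes, so the second vanishes for all small $\omega$. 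Then I would extend to all $\omega$ by analyticity and conclude with Proposition~\ref{prop:exponential-average-uniqueness} applied to $h=\rho(\tau\nabla Q-\nabla\log\rho)$. The last step of Theorem~\ref{thm:target-characterization} (continuity, positivity, normalization) then gives $\rho=\pi$.

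\textbf{Avoiding integrability of $\nabla\log\rho$.} Nothing in the hypotheses controls $\nabla\rho$, so I would not split as above. Instead I would work with the scalar generating function $M(\omega):=\int e^{\omega^\top x}\rho(x)\,\dd x$, which is finite and real-analytic on $\norm[2]{\omega}<\varepsilon$ by \eqref{eq:exponential-moment-assumption}. Its gradient is $\nabla M(\omega)=\int x\,e^{\omega^\top x}\rho\,\dd x$. Set $G(\omega):=\int e^{\omega^\top x}\rho\,\tau\nabla Q\,\dd x$, so that \eqref{eq:exponential-zero-assumption} reads $G(\omega)=-\omega M(\omega)$ near $0$.

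The key observation is that this same identity holds exactly for $\pi$ in place of $\rho$, by the cutoff integration by parts of Proposition~\ref{prop:conditional_score_zero_mean}, provided $\pi$ has exponential moments. Rather than comparing $\rho$ with $\pi$ directly, I would change variables in the unknown. Write $\rho=\pi g$ with $g>0$ of class $C^1$. Then $\rho\,\tau\nabla Q=\rho\nabla\log\rho-\rho\nabla\log g$, and the condition becomes a statement about $\nabla g$ tested against $e^{\omega^\top x}\pi$.

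This reformulation again needs boundary terms to vanish, so I expect the \textbf{main obstacle} to be justifying the integration by parts for $\nabla\bigl(e^{\omega^\top x}\rho\bigr)$ using only \eqref{eq:exponential-moment-assumption}. I would handle it with a cutoff $\varphi_R$ as in Proposition~\ref{prop:conditional_score_zero_mean}. Test the distributional identity
\begin{equation*}
\nabla\bigl(e^{\omega^\top x}\rho\bigr)=e^{\omega^\top x}\rho\,(\tau\nabla Q+\omega)+e^{\omega^\top x}\rho\,(\nabla\log\rho-\tau\nabla Q)
\end{equation*}
against smooth compactly supported $\psi$, rather than against $1$.

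\textbf{Main route via Fourier.} Define $\mu(x):=\rho(x)e^{-\tau Q(x)}$, which is $C^1$ and satisfies $\nabla\mu=e^{-\tau Q}\rho(\nabla\log\rho-\tau\nabla Q)$. Extending \eqref{eq:exponential-zero-assumption} from a neighborhood to all complex $\omega$ in a strip by holomorphy (as in Proposition~\ref{prop:exponential-average-uniqueness}), and taking $\omega=\mathrm{i}\xi$, gives
\begin{equation*}
\int e^{\mathrm{i}\xi^\top x}\rho\,\tau\nabla Q\,\dd x=-\mathrm{i}\xi\int e^{\mathrm{i}\xi^\top x}\rho\,\dd x .
\end{equation*}
In Fourier terms, this says the distribution $\nabla\rho-\rho\,\tau\nabla Q$ has vanishing Fourier transform. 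Both $\rho$ and $\rho\nabla Q$ are $L^1$, so this is a valid tempered-distribution statement. Hence $\nabla\rho=\tau\rho\nabla Q$ in the sense of distributions, and since both sides are continuous, pointwise.

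It follows that $\nabla\log\rho=\tau\nabla Q$, so $\rho=c\,e^{\tau Q}$, and normalization gives $\rho=\pi$. This route uses only the hypotheses and bypasses the boundary-term issue, so I would adopt it, with the analytic continuation from a real neighborhood to the imaginary axis as the delicate step.
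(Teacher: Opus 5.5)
Your adopted route is correct and is essentially the paper's proof: you continue the identity holomorphically along a strip to purely imaginary arguments, which gives $\Fourier[\rho\tau\nabla Q](\omega)=\mathrm{i}\omega\Fourier[\rho](\omega)$ for all real $\omega$; you then deduce $\nabla\rho=\rho\,\tau\nabla Q$ in the sense of distributions and conclude by continuity, positivity, and normalization. The differences are cosmetic. The paper checks the distributional identity by hand, testing against compactly supported smooth $\varphi$ via Fourier inversion and Fubini, where you invoke injectivity of the Fourier transform on tempered distributions. Your earlier detours (splitting off $\int\nabla(e^{\omega^\top x}\rho)\,\dd x$, or comparing with $\pi$) would have needed integrability that is not assumed, so you were right to discard them.
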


\begin{proof}
For every $\omega\in\R^d$, consider the Fourier transforms
\[
    \Fourier[\rho\tau\nabla Q](\omega)
    :=
    \int_{\R^d}e^{-\mathrm{i}\omega^\top x}\rho(x)\tau\nabla Q(x)\,\dd x,\qquad
    \Fourier[\rho](\omega)
    :=
    \int_{\R^d}e^{-\mathrm{i}\omega^\top x}\rho(x)\,\dd x.
\]
Both integrals converge absolutely by \eqref{eq:exponential-moment-assumption}. To prove $\rho=\pi$, it suffices to establish the score identity
\begin{equation}\label{eq:score-identity-exponential-tilt-identification}
    \rho\tau\nabla Q=\nabla\rho.
\end{equation}
The formal Fourier differentiation identity $\Fourier[\nabla\rho](\omega)=\mathrm{i}\omega\Fourier[\rho](\omega)$ suggests a Fourier-based argument. However, we have not assumed that $\nabla\rho$ is integrable, so its Fourier transform is not yet justified as an absolutely convergent integral. We therefore proceed in two steps: first, we establish
\begin{equation}\label{eq:Fourier-identity-exponential-tilt-identification}
    \Fourier[\rho\tau\nabla Q](\omega)=\mathrm{i}\omega\Fourier[\rho](\omega)
    \quad\text{for every }\omega\in\R^d;
\end{equation}
then, we recover \eqref{eq:score-identity-exponential-tilt-identification} from this Fourier identity using compactly supported smooth test functions.

Fix a nonzero $\omega\in\R^d$ and define $F_\omega$ on the strip $\abs{\Re z}<\frac{\varepsilon}{\norm[2]{\omega}}$ by
\begin{equation*}
    F_\omega(z):=\int_{\R^d}e^{z\omega^\top x}\rho(x)\left(\tau\nabla Q(x)+z\omega\right)\,\dd x.
\end{equation*}
This integral is well-defined for every $z$ in the strip by \eqref{eq:exponential-moment-assumption}. Applying \eqref{eq:exponential-zero-assumption} at $s\omega$ gives $F_\omega(s)=0$ for every real $s$ sufficiently close to $0$, while
\[
    F_\omega(-\mathrm{i})=\Fourier[\rho\tau\nabla Q](\omega)-\mathrm{i}\omega\Fourier[\rho](\omega).
\]
To obtain the desired Fourier identity \eqref{eq:Fourier-identity-exponential-tilt-identification}, it therefore remains to verify that $F_\omega$ is holomorphic componentwise on the strip.

To establish this, fix a compact subset $K$ of the strip and choose $R>0$ such that $\abs{z}\le R$ for every $z\in K$. Then \eqref{eq:exponential-moment-assumption} implies
\begin{align*}
    \int_{\R^d}\sup_{z\in K}\norm[2]{e^{z\omega^\top x}\rho(x)\left(\tau\nabla Q(x)+z\omega\right)}\dd x\le \int_{\R^d}e^{\varepsilon\norm[2]{x}}\rho(x)\left(\norm[2]{\tau\nabla Q(x)}+R\norm[2]{\omega}\right)\dd x
    <\infty.
\end{align*}
The integrand is measurable in $x$ and entire in $z$ for each $x$. Therefore, the locally dominated form of the holomorphic parameter-integral theorem, a consequence of \citet[Chapter~10, Exercise~16]{rudin1987real}, implies that $F_\omega$ is holomorphic componentwise on the strip.

The identity theorem \citep[Corollary of Theorem~10.18]{rudin1987real}, applied componentwise, now gives $F_\omega\equiv0$ throughout the connected strip. In particular, $F_\omega(-\mathrm{i})=0$ gives
\begin{equation*}
    \Fourier[\rho\tau\nabla Q](\omega)=\mathrm{i}\omega\Fourier[\rho](\omega).
\end{equation*}
This holds for every $\omega\neq0$. For $\omega=0$, the same identity follows from \eqref{eq:exponential-zero-assumption} at the origin. Thus \eqref{eq:Fourier-identity-exponential-tilt-identification} holds for every $\omega\in\R^d$.

It remains to recover the score identity \eqref{eq:score-identity-exponential-tilt-identification}. Fix any smooth function $\varphi:\R^d\to\R$ with compact support, and denote
\[
    \Fourier[\varphi](\omega)
    :=
    \int_{\R^d}e^{-\mathrm{i}\omega^\top x}\varphi(x)\,\dd x.
\]
Since $\varphi$ is smooth with compact support, we have \cite[Corollary~8.23]{folland1999real}
\[
    \int_{\R^d}\left(1+\norm[2]{\omega}\right)\abs{\Fourier[\varphi](\omega)}\,\dd\omega<\infty.
\]
Together with the integrability of $\rho$ and $\rho\tau\nabla Q$, this justifies the applications of Fubini's theorem below. Fourier inversion \citep[Theorem~8.26]{folland1999real} and Fubini's theorem, followed by \eqref{eq:Fourier-identity-exponential-tilt-identification} at $-\omega$, give
\begin{align*}
    \int_{\R^d}\varphi(x)\rho(x)\tau\nabla Q(x)\,\dd x
    &=\frac{1}{(2\pi)^d}\int_{\R^d}\Fourier[\varphi](\omega)\Fourier[\rho\tau\nabla Q](-\omega)\,\dd\omega\\
    &=-\frac{1}{(2\pi)^d}\int_{\R^d}\mathrm{i}\omega\Fourier[\varphi](\omega)\Fourier[\rho](-\omega)\,\dd\omega.
\end{align*}
The Fourier differentiation identity \citep[Theorem~8.22]{folland1999real} applies to the test function $\varphi$ and gives $\Fourier[\nabla\varphi](\omega)=\mathrm{i}\omega\Fourier[\varphi](\omega)$. Hence,
\begin{align*}
    -\frac{1}{(2\pi)^d}\int_{\R^d}\mathrm{i}\omega\Fourier[\varphi](\omega)\Fourier[\rho](-\omega)\,\dd\omega
    &=-\frac{1}{(2\pi)^d}\int_{\R^d}\Fourier[\nabla\varphi](\omega)\Fourier[\rho](-\omega)\,\dd\omega\\
    &=-\int_{\R^d}\rho(x)\nabla\varphi(x)\,\dd x\\
    &=\int_{\R^d}\varphi(x)\nabla\rho(x)\,\dd x.
\end{align*}
Here the second equality follows from Fourier inversion applied componentwise to $\nabla\varphi$ and Fubini's theorem. The last equality is integration by parts, justified by $\rho\in C^1$ and the compact support of $\varphi$; no global integrability of $\nabla\rho$ is required.

Therefore, for every smooth function $\varphi:\R^d\to\R$ with compact support,
\[
    \int_{\R^d}\varphi(x)\rho(x)\tau\nabla Q(x)\,\dd x
    =
    \int_{\R^d}\varphi(x)\nabla\rho(x)\,\dd x.
\]
This implies \eqref{eq:score-identity-exponential-tilt-identification} almost everywhere. The continuity of $\nabla\rho$ and $\rho\tau\nabla Q$ then gives \eqref{eq:score-identity-exponential-tilt-identification} pointwise on $\R^d$. By positivity of $\rho$,
\[
    \nabla\log\rho=\tau\nabla Q=\nabla\log\pi.
\]
The normalization of $\rho$ and $\pi$ therefore yields $\rho=\pi$.
\end{proof}

\begin{proposition}
\label{prop:tilted-posterior-limit}
Let $\rho$ be a positive, normalized $C^1$ density on $\R^d$. Assume that \eqref{eq:weighted-exponential-moment} holds with $\rho^v$ replaced by $\rho$ for some $\varepsilon>0$. Let $\varphi\in C(\R^d;\R^d)$ satisfy
\[
    \int_{\R^d}e^{\xi^\top x}\rho(x)\norm[2]{\varphi(x)}\,\dd x<\infty
    \qquad\text{for every }\xi\in\R^d\text{ with }\norm[2]{\xi}<\varepsilon.
\]
Then, for every fixed $\xi\in\R^d$ with $\norm[2]{\xi}<\varepsilon$,
\begin{equation}
    \lim_{t\downarrow0}
    \E_{X_1\sim p_{1|t}^{\rho}\left(\cdot\middle|c_t\xi\right)}
    [\varphi(X_1)]
    =
    \frac{\int_{\R^d}e^{\xi^\top x}\rho(x)\varphi(x)\,\dd x}
    {\int_{\R^d}e^{\xi^\top x}\rho(x)\,\dd x},
    \label{eq:tilted-score-limit-general}
\end{equation}
where $c_t:=\frac{\beta_t^2}{\alpha_t}$. The expectations and integrals in \eqref{eq:tilted-score-limit-general} are absolutely convergent, and the limiting denominator is strictly positive.

In particular, \eqref{eq:weighted-exponential-moment} guarantees the required integrability for both $\varphi(x)=x$ and $\varphi(x)=\tau\nabla Q(x)$. Consequently, for every fixed $\xi\in\R^d$ with $\norm[2]{\xi}<\varepsilon$,
\begin{subequations}\label{eq:tilted-score-limit-X1-and-nablaQ}
    \begin{align}
        &\lim_{t\downarrow0}
        \E_{X_1\sim p_{1|t}^{\rho}\left(\cdot\middle|c_t\xi\right)}
        [X_1]
        =
        \frac{\int_{\R^d}e^{\xi^\top x}\rho(x)x\,\dd x}
        {\int_{\R^d}e^{\xi^\top x}\rho(x)\,\dd x},
        \label{eq:tilted-score-limit-X1}\\
        &\lim_{t\downarrow0}
        \E_{X_1\sim p_{1|t}^{\rho}\left(\cdot\middle|c_t\xi\right)}
        [\tau\nabla Q(X_1)]
        =
        \frac{\int_{\R^d}e^{\xi^\top x}\rho(x)\tau\nabla Q(x)\,\dd x}
        {\int_{\R^d}e^{\xi^\top x}\rho(x)\,\dd x}.
        \label{eq:tilted-score-limit-nablaQ}
    \end{align}
\end{subequations}
Both limits are finite. In particular, for each such $\xi$, both posterior means are bounded for all sufficiently small $t$.
\end{proposition}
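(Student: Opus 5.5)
We will write out the posterior explicitly at the point $x_t=c_t\xi$ and let $t\downarrow0$ by dominated convergence. By Bayes' rule and the Gaussian form of $p_{t|1}$,
\[
    p_{1|t}^\rho(x_1|x_t)\propto\rho(x_1)\exp\left(\frac{\alpha_t}{\beta_t^2}x_t^\top x_1-\frac{\alpha_t^2}{2\beta_t^2}\norm[2]{x_1}^2\right).
\]
Substituting $x_t=c_t\xi$ with $c_t=\beta_t^2/\alpha_t$ makes the linear coefficient exactly $\xi$. Hence
\[
    \E_{X_1\sim p_{1|t}^{\rho}(\cdot|c_t\xi)}[\varphi(X_1)]
    =\frac{\int e^{\xi^\top x}e^{-\gamma_t\norm[2]{x}^2}\rho(x)\varphi(x)\,\dd x}{\int e^{\xi^\top x}e^{-\gamma_t\norm[2]{x}^2}\rho(x)\,\dd x},
    \qquad \gamma_t:=\frac{\alpha_t^2}{2\beta_t^2}.
\]
Since $\alpha_0=0$ and $\beta_0=1$, continuity of the schedules gives $\gamma_t\to0$ as $t\downarrow0$, with $\gamma_t>0$ on $(0,1)$.

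For each fixed $t$, both integrals converge absolutely. The Gaussian factor is at most $1$, so the numerator is dominated by $e^{\xi^\top x}\rho\norm[2]{\varphi}$, which is integrable by hypothesis. The denominator is dominated by $e^{\xi^\top x}\rho\le e^{\varepsilon\norm[2]{x}}\rho$, which is integrable by \eqref{eq:weighted-exponential-moment}. The denominator is strictly positive because $\rho>0$ everywhere. As $t\downarrow0$, the factor $e^{-\gamma_t\norm[2]{x}^2}\uparrow1$ pointwise, so dominated convergence applies to numerator and denominator separately, using the same $t$-independent dominating functions.

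The limiting denominator $\int e^{\xi^\top x}\rho$ is finite and strictly positive, again by positivity of $\rho$. The limit of the ratio therefore equals the ratio of the limits, which gives \eqref{eq:tilted-score-limit-general}. Finiteness of that limit immediately yields boundedness of the posterior means for all sufficiently small $t$.

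For the particular cases, we check the integrability hypothesis for $\norm[2]{\xi}<\varepsilon$. Since $e^{\xi^\top x}\le e^{\norm[2]{\xi}\norm[2]{x}}$, the choice $\varphi=\tau\nabla Q$ is directly dominated by \eqref{eq:weighted-exponential-moment}. For $\varphi(x)=x$, choose $\varepsilon'$ with $\norm[2]{\xi}<\varepsilon'<\varepsilon$. Then $\norm[2]{x}e^{\varepsilon'\norm[2]{x}}\le C e^{\varepsilon\norm[2]{x}}$, because $\norm[2]{x}e^{-(\varepsilon-\varepsilon')\norm[2]{x}}$ is bounded. One subtlety is that the general hypothesis is stated for every $\norm[2]{\xi}<\varepsilon$, so this strict-inequality slack must be used pointwise in $\xi$; that suffices because the claim is for each fixed $\xi$.

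The only mildly delicate points are this slack argument for $\varphi(x)=x$ and noting that $\gamma_t\to0$ needs only $\alpha_t\to0$ and $\beta_t\to1$. Beyond these, the proof is a routine application of monotone/dominated convergence.
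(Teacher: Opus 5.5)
Your proposal is correct and takes essentially the same route as the paper. Like the paper, you rewrite the posterior at $x_t=c_t\xi$ as a Gaussian-damped exponential tilt of $\rho$, take the limit of numerator and denominator separately by dominated convergence with the $t$-independent dominators $e^{\xi^\top x}\rho$ and $e^{\xi^\top x}\rho\norm[2]{\varphi}$, and use the slack $\norm[2]{\xi}<\varepsilon$ for $\varphi(x)=x$. Your auxiliary $\varepsilon'$ is unnecessary, since the paper simply takes it equal to $\norm[2]{\xi}$ via $re^{-(\varepsilon-\norm[2]{\xi})r}\le(\varepsilon-\norm[2]{\xi})^{-1}$, and the ``subtlety'' you flag is harmless because the argument only invokes the hypothesis at the fixed $\xi$.
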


\begin{proof}
Fix $\xi\in\R^d$ with $\norm[2]{\xi}<\varepsilon$. Expanding the Gaussian likelihood at $x_t=c_t\xi$ gives
\[
    -\frac{1}{2\beta_t^2}
    \norm[2]{c_t\xi-\alpha_t x_1}^2
    =
    -\frac{\beta_t^2}{2\alpha_t^2}\norm[2]{\xi}^2
    +\xi^\top x_1
    -\frac{\alpha_t^2}{2\beta_t^2}\norm[2]{x_1}^2.
\]
The first term is independent of $x_1$ and cancels in posterior normalization, yielding
\begin{equation}
    p_{1|t}^{\rho}\left(x_1\middle|c_t\xi\right)
    =
    \frac{\exp\left(\xi^\top x_1-\frac{\alpha_t^2}{2\beta_t^2}\norm[2]{x_1}^2\right)\rho(x_1)}
    {\int_{\R^d}\exp\left(\xi^\top x-\frac{\alpha_t^2}{2\beta_t^2}\norm[2]{x}^2\right)\rho(x)\,\dd x}.
    \label{eq:tilted-posterior}
\end{equation}

We first establish
\begin{subequations}\label{eq:limitequation-tilted-posterior-limit}
\begin{align}
    &\lim_{t\downarrow0}\int_{\R^d}\exp\left(\xi^\top x-\frac{\alpha_t^2}{2\beta_t^2}\norm[2]{x}^2\right)\rho(x)\,\dd x
    =\int_{\R^d}e^{\xi^\top x}\rho(x)\,\dd x,\\
    &\lim_{t\downarrow0}\int_{\R^d}\exp\left(\xi^\top x-\frac{\alpha_t^2}{2\beta_t^2}\norm[2]{x}^2\right)\rho(x)\varphi(x)\,\dd x
    =\int_{\R^d}e^{\xi^\top x}\rho(x)\varphi(x)\,\dd x.
\end{align}
\end{subequations}

Both integrands are measurable since $\rho$ and $\varphi$ are continuous. Since $\alpha,\beta$ are continuous with $\alpha_0=0$ and $\beta_0=1$, we have $\frac{\alpha_t}{\beta_t}\to0$ as $t\downarrow0$. Thus, for every $x\in\R^d$,
\begin{align*}
    &\lim_{t\downarrow0}\exp\left(\xi^\top x-\frac{\alpha_t^2}{2\beta_t^2}\norm[2]{x}^2\right)\rho(x)
    =e^{\xi^\top x}\rho(x),\\
    &\lim_{t\downarrow0}\exp\left(\xi^\top x-\frac{\alpha_t^2}{2\beta_t^2}\norm[2]{x}^2\right)\rho(x)\varphi(x)
    =e^{\xi^\top x}\rho(x)\varphi(x).
\end{align*}
For every $t\in(0,1)$ and $x\in\R^d$,
\begin{align*}
    &0<
    \exp\left(\xi^\top x-\frac{\alpha_t^2}{2\beta_t^2}\norm[2]{x}^2\right)\rho(x)
    \le
    e^{\xi^\top x}\rho(x)
    \le
    e^{\varepsilon\norm[2]{x}}\rho(x),\\
    &\norm[2]{
        \rho(x)\exp\left(\xi^\top x-\frac{\alpha_t^2}{2\beta_t^2}\norm[2]{x}^2\right)\varphi(x)
    }
    \le
    e^{\xi^\top x}\rho(x)\norm[2]{\varphi(x)}.
\end{align*}
The right-hand sides are independent of $t$. The scalar bound is integrable by \eqref{eq:weighted-exponential-moment}, and the vector bound is integrable by the assumption on $\varphi$. Therefore, Lebesgue's dominated convergence theorem \citep[Theorem~1.34]{rudin1987real}, applied componentwise to the vector-valued integral, implies \eqref{eq:limitequation-tilted-posterior-limit}.

The same bounds ensure absolute convergence of both integrals for every $t\in(0,1)$ and of their limits. Since $\rho>0$, the denominator in \eqref{eq:tilted-posterior} and its limit are strictly positive. Taking the ratio therefore gives \eqref{eq:tilted-score-limit-general}.

It remains to verify the integrability assumption for the two stated choices of $\varphi$. For $\varphi(x)=x$, the strict inequality $\norm[2]{\xi}<\varepsilon$ gives
\begin{align*}
    \int_{\R^d}e^{\xi^\top x}\rho(x)\norm[2]{x}\,\dd x
    &\le
    \int_{\R^d}e^{\norm[2]{\xi}\norm[2]{x}}\rho(x)\norm[2]{x}\,\dd x
    \le
    \frac{1}{\varepsilon-\norm[2]{\xi}}
    \int_{\R^d}e^{\varepsilon\norm[2]{x}}\rho(x)\,\dd x
    <\infty,
\end{align*}
where we used $r e^{-(\varepsilon-\norm[2]{\xi})r}\le(\varepsilon-\norm[2]{\xi})^{-1}$ for $r\ge0$. For $\varphi(x)=\tau\nabla Q(x)$,
\[
    \int_{\R^d}e^{\xi^\top x}\rho(x)\norm[2]{\tau\nabla Q(x)}\,\dd x
    \le
    \int_{\R^d}e^{\varepsilon\norm[2]{x}}\rho(x)\norm[2]{\tau\nabla Q(x)}\,\dd x
    <\infty.
\]
Both bounds follow from \eqref{eq:weighted-exponential-moment}. Applying \eqref{eq:tilted-score-limit-general} to these two continuous functions proves \eqref{eq:tilted-score-limit-X1-and-nablaQ}. Since their limits are finite, both posterior means are bounded for all sufficiently small $t$.
\end{proof}

\restatetheorem{\scfVelocityOptimality}

\begin{proof}
Suppose $\rho^v=\pi$ and $v=v^\pi$. By \eqref{eq:target-identity} and \eqref{eq:score-calibration-operator},
\[
    v_t^\pi(x_t)=\frac{\dot\alpha_t}{\alpha_t}x_t
    +\frac{\kappa_t}{\alpha_t}\E_{X_1\sim p_{1|t}^{\pi}(\cdot|x_t)}[\tau\nabla Q(X_1)]
    =(\iter v^\pi)_t(x_t)
\]
for every $t\in(0,1)$ and $x_t\in\R^d$. Therefore, $v^\pi=\iter v^\pi$. It remains to prove the converse direction.

Assume $v=\iter v$. For notational simplicity, denote $\rho:=\rho^v$ and $c_t:=\frac{\beta_t^2}{\alpha_t}$. Fix $\xi\in\R^d$ with $\norm[2]{\xi}<\min\left\{\delta,\frac{\varepsilon}2\right\}$, where $\delta$ and $\varepsilon$ are from \eqref{eq:source-growth}--\eqref{eq:weighted-exponential-moment} in Appendix~\ref{app:regularity}. By Proposition~\ref{prop:tilted-posterior-limit}, the limit
\begin{equation}\label{eq:limit_existence}
    \lim_{t\downarrow0}\left(\xi+\E_{X_1\sim p_{1|t}^{\rho}(\cdot|c_t\xi)}[\tau\nabla Q(X_1)]\right)
    =
    \frac{\int_{\R^d}e^{\xi^\top x}\rho(x)\left(\xi+\tau\nabla Q(x)\right)\,\dd x}
    {\int_{\R^d}e^{\xi^\top x}\rho(x)\,\dd x}\in\R^d
\end{equation}
exists. We first prove that this limit is zero, then apply Proposition~\ref{prop:exponential-tilt-identification} to identify $\rho$, and finally use the fixed-point equation to identify $v$.

The assumption $v=\iter v$ gives
\[
    v_t(c_t\xi)=\frac{\dot\alpha_t\beta_t^2}{\alpha_t^2}\xi
    +\frac{\kappa_t}{\alpha_t}\E_{X_1\sim p_{1|t}^{\rho}(\cdot|c_t\xi)}[\tau\nabla Q(X_1)].
\]
Using \eqref{eq:kappa}, we obtain
\begin{align*}
    \frac{\alpha_t}{\beta_t^2}v_t(c_t\xi)-\frac{\dot\beta_t}{\beta_t}\xi
    =&\left(\frac{\dot\alpha_t}{\alpha_t}-\frac{\dot\beta_t}{\beta_t}\right)\xi
    +\frac{\kappa_t}{\beta_t^2}\E_{X_1\sim p_{1|t}^{\rho}(\cdot|c_t\xi)}[\tau\nabla Q(X_1)]\\
    =&\frac{\kappa_t}{\beta_t^2}\left(\xi+\E_{X_1\sim p_{1|t}^{\rho}(\cdot|c_t\xi)}[\tau\nabla Q(X_1)]\right).
\end{align*}
Since $t_0<1$, \eqref{eq:source-growth} implies
\begin{align}
    &\int_0^{t_0}\frac{\kappa_t}{\beta_t^2}\norm[2]{\xi+\E_{X_1\sim p_{1|t}^{\rho}(\cdot|c_t\xi)}[\tau\nabla Q(X_1)]}\,\dd t\notag\\
    =&\int_0^{t_0}\norm[2]{\frac{\alpha_t}{\beta_t^2}v_t(c_t\xi)-\frac{\dot\beta_t}{\beta_t}\xi}\,\dd t\notag\\
    \le&\int_0^{t_0}\left(\frac{\alpha_t}{\beta_t^2}\norm[2]{v_t(c_t\xi)}+\abs{\frac{\dot\beta_t}{\beta_t}}\norm[2]{\xi}\right)\,\dd t\notag\\
    \le&\int_0^{t_0}\left(\frac{\alpha_t}{\beta_t^2}\left(a_t+b_tc_t\norm[2]{\xi}\right)+\abs{\frac{\dot\beta_t}{\beta_t}}\norm[2]{\xi}\right)\,\dd t\notag\\
    =&\int_0^{t_0}\frac{\alpha_t}{\beta_t^2}a_t\,\dd t
    +\norm[2]{\xi}\int_0^{t_0}\left(b_t+\abs{\frac{\dot\beta_t}{\beta_t}}\right)\,\dd t<\infty.
    \label{eq:integrable-tilted-discrepancy}
\end{align}
On the other hand, \eqref{eq:kappa} implies
    \begin{equation*}
        \int_0^{t_0}\frac{\kappa_t}{\beta_t^2}\,\dd t
        =\left.\log\frac{\alpha_{t}}{\beta_{t}}\right|_{t=0}^{t_0}
        =\infty.
    \end{equation*}

Suppose, for contradiction, that the limit in \eqref{eq:limit_existence} is nonzero. Then there exist $m>0$ and $t_1\in(0,t_0)$ such that
\[
    \norm[2]{\xi+\E_{X_1\sim p_{1|t}^{\rho}(\cdot|c_t\xi)}[\tau\nabla Q(X_1)]}\ge m
    \quad\text{for every }t\in(0,t_1).
\]
Consequently, the integral in \eqref{eq:integrable-tilted-discrepancy} is at least $m\int_0^{t_1}\frac{\kappa_t}{\beta_t^2}\,\dd t=\infty$, which leads to a contradiction. Therefore, the limit \eqref{eq:limit_existence} is zero. Multiplying by the strictly positive limiting denominator in \eqref{eq:limit_existence}, and using the arbitrariness of $\xi$, gives
\begin{equation}\label{eq:exponential-score-identities}
    \int_{\R^d}e^{\xi^\top x}\rho(x)\left(\xi+\tau\nabla Q(x)\right)\,\dd x=0
    \quad\text{for every }\norm[2]{\xi}<\min\left\{\delta,\frac{\varepsilon}2\right\}.
\end{equation}

Applying Proposition~\ref{prop:exponential-tilt-identification} with \eqref{eq:weighted-exponential-moment} and \eqref{eq:exponential-score-identities} gives $\rho=\pi$. Substituting this equality into $v=\iter v$ and using \eqref{eq:target-identity} yields $v=v^\pi$.
\end{proof}

\restatecorollary{\scfGeneralizedOptimality}

\begin{proof}
    For notational simplicity, denote $\rho:=\rho^v$ and $c_t:=\frac{\beta_t^2}{\alpha_t}$. Since $\lambda_t+\zeta_t>0$, denote $\bar{\lambda}_t:=\frac{\lambda_t}{\lambda_t+\zeta_t}\in(0,1]$. Then, $v=\iter_{\lambda,\zeta}v$ is equivalent to
    \begin{align}
        v_t(x_t)&=\left(1-\bar{\lambda}_t\right)
        \E_{X_1\sim p_{1|t}^{\rho}(\cdot|x_t)}[v_{t|1}(x_t|X_1)]+\bar{\lambda}_t\left(
        \frac{\dot\alpha_t}{\alpha_t}x_t
        +\frac{\kappa_t}{\alpha_t}\E_{X_1\sim p_{1|t}^{\rho}(\cdot|x_t)}[\tau\nabla Q(X_1)]\right).
        \label{eq:generalized-fixed-point-expanded}
    \end{align}

    Suppose $v=v^\pi$ and $\rho=\pi$. Then, $v_t^\pi(x_t)=\E_{X_1\sim p_{1|t}^{\pi}(\cdot|x_t)}[v_{t|1}(x_t|X_1)]$ and $v^\pi=\iter v^\pi$ imply \eqref{eq:generalized-fixed-point-expanded}, which gives $v^\pi=\iter_{\lambda,\zeta}v^\pi$.
    
    Suppose $v=\iter_{\lambda,\zeta}v$, which gives \eqref{eq:generalized-fixed-point-expanded}. In the following, we prove $v=v^\pi$ and $\rho=\pi$ following the proof of Theorem \ref{thm:velocity-optimality}. Fix $\xi\in\R^d$ with $\norm[2]{\xi}<\min\left\{\delta,\frac\varepsilon2\right\}$, where $\delta$ and $\varepsilon$ are from \eqref{eq:source-growth}--\eqref{eq:weighted-exponential-moment} in Appendix~\ref{app:regularity}. By Proposition~\ref{prop:tilted-posterior-limit}, the limit
    \begin{equation}\label{eq:limit_existence_cor_3_5}
        \lim_{t\downarrow0}\left(\xi+\E_{X_1\sim p_{1|t}^{\rho}(\cdot|c_t\xi)}[\tau\nabla Q(X_1)]\right)
        =
        \frac{\int_{\R^d}e^{\xi^\top x}\rho(x)\left(\xi+\tau\nabla Q(x)\right)\,\dd x}
        {\int_{\R^d}e^{\xi^\top x}\rho(x)\,\dd x}\in\R^d
    \end{equation}
    exists. We first prove that this limit is zero, then apply Proposition~\ref{prop:exponential-tilt-identification} to identify $\rho$, and finally use the fixed-point equation to identify $v$.

The assumption $v=\iter_{\lambda,\zeta}v$ gives
\begin{align*}
    &\frac{\alpha_t}{\beta_t^2}v_t\left(c_t\xi\right)
    -\frac{\dot\beta_t}{\beta_t}\xi=\left(1-\bar{\lambda}_t\right)\frac{\alpha_t^2\kappa_t}{\beta_t^4}\E_{X_1\sim p_{1|t}^{\rho}(\cdot|c_t\xi)}[X_1]
    +\bar{\lambda}_t\frac{\kappa_t}{\beta_t^2}\left(\xi+\E_{X_1\sim p_{1|t}^{\rho}(\cdot|c_t\xi)}[\tau\nabla Q(X_1)]\right).
\end{align*}
Following \eqref{eq:integrable-tilted-discrepancy}, the condition \eqref{eq:source-growth} implies
\begin{align*}
    \int_0^{t_0}\norm[2]{\frac{\alpha_t}{\beta_t^2}v_t(c_t\xi)-\frac{\dot\beta_t}{\beta_t}\xi}\,\dd t<\infty.
\end{align*}
Note that
\begin{equation*}
    \frac{\alpha_t^2\kappa_t}{\beta_t^4}
    =\frac12\frac{\dd}{\dd t}\left(\frac{\alpha_t^2}{\beta_t^2}\right),
    \qquad
    \int_0^{t_0}\frac{\alpha_t^2\kappa_t}{\beta_t^4}\,\dd t
    =\frac{\alpha_{t_0}^2}{2\beta_{t_0}^2}<\infty.
\end{equation*}
By Proposition~\ref{prop:tilted-posterior-limit}, the existence of the limit \eqref{eq:tilted-score-limit-X1} implies
\[
    \int_0^{t_0}\norm[2]{\left(1-\bar{\lambda}_t\right)\frac{\alpha_t^2\kappa_t}{\beta_t^4}\E_{X_1\sim p_{1|t}^{\rho}(\cdot|c_t\xi)}[X_1]}\,\dd t<\infty.
\]
Therefore, 
\begin{equation}
    \int_0^{t_0}\norm[2]{\bar{\lambda}_t\frac{\kappa_t}{\beta_t^2}\left(\xi+\E_{X_1\sim p_{1|t}^{\rho}(\cdot|c_t\xi)}[\tau\nabla Q(X_1)]\right)}\,\dd t<\infty.
    \label{eq:integrable-tilted-discrepancy-generalized}
\end{equation}

Suppose, for contradiction, that the limit in \eqref{eq:limit_existence_cor_3_5} is nonzero. Then there exist $m>0$ and $t_1\in(0,t_0)$ such that
\[
    \norm[2]{\xi+\E_{X_1\sim p_{1|t}^{\rho}(\cdot|c_t\xi)}[\tau\nabla Q(X_1)]}\ge m
    \quad\text{for every }t\in(0,t_1).
\]
Since $0<\bar{\lambda}_t\le1$ and $\frac{\kappa_t}{\beta_t^2}$ is continuous on $[t_1,t_0]$, condition \eqref{eq:generalized-uniqueness-condition} implies
\[
    \int_0^{t_1}\bar{\lambda}_t\frac{\kappa_t}{\beta_t^2}\,\dd t=\infty.
\]
Consequently, the integral in \eqref{eq:integrable-tilted-discrepancy-generalized} is at least $m\int_0^{t_1}\bar{\lambda}_t\frac{\kappa_t}{\beta_t^2}\,\dd t=\infty$, which leads to a contradiction. Therefore, the limit \eqref{eq:limit_existence_cor_3_5} is zero. Multiplying by the strictly positive limiting denominator in \eqref{eq:limit_existence_cor_3_5}, and using the arbitrariness of $\xi$, gives
\begin{equation*}
    \int_{\R^d}e^{\xi^\top x}\rho(x)\left(\xi+\tau\nabla Q(x)\right)\,\dd x=0
    \quad\text{for every }\norm[2]{\xi}<\min\left\{\delta,\frac{\varepsilon}2\right\}.
\end{equation*}
Applying Proposition~\ref{prop:exponential-tilt-identification} gives $\rho=\pi$. By \eqref{eq:population-velocity} and \eqref{eq:target-identity}, both posterior expressions in \eqref{eq:generalized-fixed-point-expanded} then equal $v_t^\pi(x_t)$. Hence,
\[
    v_t(x_t)=(1-\bar{\lambda}_t)v_t^\pi(x_t)+\bar{\lambda}_t v_t^\pi(x_t)=v_t^\pi(x_t)
\]
for every $t\in(0,1)$ and $x_t\in\R^d$.
\end{proof}

\begin{restatable}{proposition}{scfGlobalGrowth}
\label{prop:global-growth}
Let $\pi$ be the target density defined in \eqref{eq:target-density}, and assume that the regularity condition \eqref{eq:posterior-moment-condition} holds for $\rho=\pi$. Assume that $v^\pi$ and $v$ are admissible, as specified in Appendix~\ref{app:regularity}. Suppose that there exist nonnegative measurable functions $a,b\in L^1(0,1)$ such that
\begin{equation}
    \norm[2]{v_t(x_t)}\le a_t+b_t\norm[2]{x_t}\quad\text{for almost every }t\in(0,1)\text{ and every }x_t\in\R^d.
    \label{eq:source-growth-t01}
\end{equation}
Assume additionally that there exist constants $C_Q\in[0,\infty)$ and $m\ge0$ such that
\begin{equation}
    \norm[2]{\tau\nabla Q(x)}\le C_Q(1+\norm[2]{x}^{m})
    \qquad\text{for every }x\in\R^d.
    \label{eq:polynomial-score}
\end{equation}
Then \eqref{eq:source-growth} holds for every $t_0\in(0,1)$, and \eqref{eq:weighted-exponential-moment} holds for every $\varepsilon>0$. Consequently, $v=\iter v$ holds on $(0,1)\times\R^d$ if and only if $\rho^v=\pi$ and $v=v^\pi$.
\end{restatable}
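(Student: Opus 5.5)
The plan is to verify the two hypotheses of Theorem~\ref{thm:velocity-optimality} and then invoke that theorem directly.

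\textbf{Source-growth condition.} Condition \eqref{eq:source-growth} for an arbitrary $t_0\in(0,1)$ is nearly immediate. Restricting \eqref{eq:source-growth-t01} to $(0,t_0)$ gives the pointwise bound for all $x_t$, in particular those with $\norm[2]{x_t}\le\frac{\beta_t^2}{\alpha_t}\delta$ for any $\delta>0$. We have $b\in L^1(0,t_0)$ by assumption. Since $\alpha,\beta$ are continuous on $[0,t_0]$ and $\beta_t>0$ there (using $\beta_0=1$), the factor $\frac{\alpha_t}{\beta_t^2}$ is bounded on $[0,t_0]$, so $\frac{\alpha_t}{\beta_t^2}a_t\in L^1(0,t_0)$.

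\textbf{Exponential-moment condition: Gr\"onwall bound.} The main work is \eqref{eq:weighted-exponential-moment} for every $\varepsilon>0$, using only the ODE. For $x_0\in\R^d$, let $Y_t=\flow_{0\to t}^v(x_0)$. Admissibility gives absolute continuity, so $r_t:=\norm[2]{Y_t}$ satisfies, for a.e.\ $t$,
\[
\frac{\dd}{\dd t}r_t\le\norm[2]{\dot Y_t}\le a_t+b_tr_t .
\]
To be careful at $Y_t=0$, I would work with $\sqrt{1+\norm[2]{Y_t}^2}$, or use the integral form $r_t\le r_0+\int_0^t(a_s+b_sr_s)\,\dd s$. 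Gr\"onwall's inequality then gives
\[
\norm[2]{Y_1}\le e^{B}\bigl(\norm[2]{x_0}+A\bigr),\qquad A:=\norm[1]{a}_{L^1(0,1)},\quad B:=\norm[1]{b}_{L^1(0,1)} .
\]

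\textbf{Exponential-moment condition: change of variables.} Since $\rho^v$ is the pushforward of $p_0$ under $\flow_{0\to1}^v$, for any nonnegative measurable $g$ we have $\int g\,\rho^v\,\dd x=\E_{X_0\sim p_0}[g(\flow_{0\to1}^v(X_0))]$. Take $g(x)=e^{\varepsilon\norm[2]{x}}(1+C_Q(1+\norm[2]{x}^m))$, which dominates the integrand in \eqref{eq:weighted-exponential-moment} by \eqref{eq:polynomial-score}. Combining with the Gr\"onwall bound, the integrand is bounded by
\[
e^{\varepsilon e^{B}(\norm[2]{X_0}+A)}\,\mathrm{poly}(\norm[2]{X_0}).
\]
This has finite Gaussian expectation for every $\varepsilon>0$, because Gaussian tails $e^{-\norm[2]{x_0}^2/2}$ dominate any $e^{c\norm[2]{x_0}}$ times a polynomial. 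This is exactly why linear growth over the full interval $(0,1)$, and not just near the source, is required: it bounds the endpoint linearly in the initial point.

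\textbf{Conclusion and remaining checks.} Both hypotheses of Theorem~\ref{thm:velocity-optimality} now hold, and the remaining hypotheses (admissibility of $v$ and $v^\pi$, and \eqref{eq:posterior-moment-condition} for $\pi$) are assumed, so the equivalence follows. The only delicate points are measurability of $x_0\mapsto\flow_{0\to1}^v(x_0)$, which admissibility supplies, and the Gr\"onwall step with merely $L^1$ coefficients. For the latter I would use the standard integral form of Gr\"onwall's lemma for nonnegative integrable $b$, which I expect to be the main technical care point though not a real obstacle.
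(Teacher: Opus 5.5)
Your proposal is correct and follows essentially the same route as the paper. You use boundedness of $\alpha_t/\beta_t^2$ on $[0,t_0]$ for the source-growth condition, Gr\"onwall to get $\|Y_1\|_2\le e^{\|b\|_{L^1}}(\|Y_0\|_2+\|a\|_{L^1})$, and the pushforward identity plus Gaussian tails for the weighted exponential moment, before invoking Theorem~\ref{thm:velocity-optimality}. Your extra care about the norm at $Y_t=0$ and the integral form of Gr\"onwall with merely $L^1$ coefficients makes explicit a step the paper handles by writing the differential inequality for $\|Y_t\|_2$ directly.
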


\begin{proof}
Based on Theorem \ref{thm:velocity-optimality}, it remains to verify conditions \eqref{eq:source-growth}--\eqref{eq:weighted-exponential-moment}.

For every $t_0\in(0,1)$, $\lim_{t\downarrow0}\frac{\alpha_t}{\beta_t^2}=0$ and $\frac{\alpha_t}{\beta_t^2}$ is bounded on $[0,t_0]$; hence, $a\in L^1(0,1)$ implies $\frac{\alpha_t}{\beta_t^2}a_t\in L^1(0,t_0)$. Therefore, \eqref{eq:source-growth-t01} implies \eqref{eq:source-growth}.

Along an ODE trajectory $Y_t$ of $v$, the following inequality holds for almost every $t\in(0,1)$:
\[
\frac{\dd}{\dd t}\norm[2]{Y_t}\leq \norm[2]{v_t(Y_t)}\leq a_t+b_t\norm[2]{Y_t}.
\]
Denote $\norm[L^1]{a}=\int_0^1\abs{a_t}\,\dd t$ and $\norm[L^1]{b}=\int_0^1\abs{b_t}\,\dd t$. Gr\"onwall's inequality therefore yields
\begin{equation*}
    \norm[2]{Y_1}\le
    \exp \left(\int_0^1b_t\,\dd t\right)
    \left(\norm[2]{Y_0}+\int_0^1a_t\,\dd t\right)
    \leq e^{\norm[L^1]{b}}\left(\norm[2]{Y_0}+\norm[L^1]{a}\right).
\end{equation*}
Therefore, for every $\varepsilon>0$, \eqref{eq:polynomial-score} implies
\begin{align*}
    &\E_{Y_1\sim\rho^v}\left[e^{\varepsilon\norm[2]{Y_1}}\left(1+\norm[2]{\tau\nabla Q(Y_1)}\right)\right]\\
    \leq&\E_{Y_1\sim\rho^v}\left[e^{\varepsilon\norm[2]{Y_1}}\left(1+C_Q(1+\norm[2]{Y_1}^{m})\right)\right]\\
    \leq&\E_{Y_0\sim p_0}\left[\exp\left(\varepsilon e^{\norm[L^1]{b}}\left(\norm[2]{Y_0}+\norm[L^1]{a}\right)\right)\left(1+C_Q+C_Qe^{m\norm[L^1]{b}}\left(\norm[2]{Y_0}+\norm[L^1]{a}\right)^m\right)\right]\\
    <&\infty,
\end{align*}
which gives \eqref{eq:weighted-exponential-moment}.

According to Theorem \ref{thm:velocity-optimality}, $v=\iter v$ holds on $(0,1)\times\R^d$ if and only if $\rho^v=\pi$ and $v=v^\pi$.
\end{proof}

\subsection{Coefficient Selection}
\label{app:coefficient-proofs}

The generalized velocity optimality condition and training objective leave the coefficient schedules unspecified. We select them to control numerical singularities in the regression target while preserving the uniqueness guarantee.

\paragraph{Singularity cancellation.}
The score-based contribution in \eqref{eq:scf-target} contains the factors $\lambda_t\dot\alpha_t/\alpha_t$ and $\lambda_t\kappa_t/\alpha_t$. Using the definition of $\kappa_t$, the latter becomes
\begin{equation*}
    \lambda_t\frac{\kappa_t}{\alpha_t}
    =\frac{\lambda_t}{\alpha_t^2}\beta_t(\dot\alpha_t\beta_t-\alpha_t\dot\beta_t).
\end{equation*}
The inverse powers of $\alpha_t$ can cause numerical difficulties near $t=0$. Choosing $\lambda_t=\eta\alpha_t^2$, with $\eta>0$, cancels these factors in both coefficients.

\begin{restatable}{proposition}{scfBoundedCalibration}
\label{prop:bounded-calibration}
Fix $\eta>0$. For $\lambda_t=\eta\alpha_t^2$,
\begin{equation}
    \lambda_t\frac{\dot\alpha_t}{\alpha_t}=\eta\alpha_t\dot\alpha_t,
    \qquad
    \lambda_t\frac{\kappa_t}{\alpha_t}=\eta\beta_t(\dot\alpha_t\beta_t-\alpha_t\dot\beta_t).
    \label{eq:bounded-calibration-coefficients}
\end{equation}
The two coefficients in \eqref{eq:bounded-calibration-coefficients} are continuous and bounded on $[0,1]$.
\end{restatable}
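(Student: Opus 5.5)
The plan is to verify both identities in \eqref{eq:bounded-calibration-coefficients} by direct algebraic substitution, and then read off continuity and boundedness from the standing regularity of the schedules. The only subtlety is that the original expressions $\dot\alpha_t/\alpha_t$ and $\kappa_t/\alpha_t$ are defined only on $(0,1)$. So I will compute on $(0,1)$ and then identify the products with functions that extend continuously to the closed interval.

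First, on $(0,1)$, substituting $\lambda_t=\eta\alpha_t^2$ gives $\lambda_t\dot\alpha_t/\alpha_t=\eta\alpha_t\dot\alpha_t$ immediately, since $\alpha_t>0$ there. For the second coefficient I use the definition \eqref{eq:kappa}, $\kappa_t=\frac{\beta_t}{\alpha_t}(\dot\alpha_t\beta_t-\alpha_t\dot\beta_t)$. This gives $\kappa_t/\alpha_t=\frac{\beta_t}{\alpha_t^2}(\dot\alpha_t\beta_t-\alpha_t\dot\beta_t)$, as already displayed before the proposition. Multiplying by $\eta\alpha_t^2$ cancels the $\alpha_t^{-2}$ factor and yields $\eta\beta_t(\dot\alpha_t\beta_t-\alpha_t\dot\beta_t)$.

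Second, for regularity, the right-hand sides $\eta\alpha_t\dot\alpha_t$ and $\eta\beta_t(\dot\alpha_t\beta_t-\alpha_t\dot\beta_t)$ are polynomial expressions in $\alpha_t,\beta_t,\dot\alpha_t,\dot\beta_t$. Since $\alpha,\beta\in C^1([0,1])$ by Section~\ref{sec:cfm-background}, these four functions are continuous on the compact interval $[0,1]$. Hence the right-hand sides define continuous functions on $[0,1]$, and these are the continuous extensions of the two coefficients from $(0,1)$. By the extreme value theorem, continuous functions on $[0,1]$ are bounded. This gives the claim, with the explicit bounds
\[
\abs{\eta\alpha_t\dot\alpha_t}\le\eta\norm[\infty]{\alpha}\norm[\infty]{\dot\alpha},\qquad
\abs{\eta\beta_t(\dot\alpha_t\beta_t-\alpha_t\dot\beta_t)}\le\eta\norm[\infty]{\beta}\bigl(\norm[\infty]{\dot\alpha}\norm[\infty]{\beta}+\norm[\infty]{\alpha}\norm[\infty]{\dot\beta}\bigr).
\]

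There is no real obstacle. The one point to state carefully is the interpretation of "continuous on $[0,1]$": the coefficients are initially defined only on $(0,1)$, where the divisions make sense, and they are understood through their closed-form extensions. One can also record the endpoint values. At $t=0$ the coefficients equal $0$ and $\eta\dot\alpha_0$, using $\alpha_0=0$ and $\beta_0=1$. At $t=1$ both equal $0$, using $\beta_1=0$. This confirms that the source-end singularity is removed.
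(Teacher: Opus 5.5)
Your proof is correct and follows the paper's own argument: substitute $\lambda_t=\eta\alpha_t^2$ and the definition of $\kappa_t$ on $(0,1)$, then note that the resulting closed forms are continuous, hence bounded, on $[0,1]$ because $\alpha,\beta\in C^1([0,1])$. One slip in your closing aside does not affect the claim: at $t=1$ the first coefficient equals $\eta\alpha_1\dot\alpha_1=\eta\dot\alpha_1$ (e.g.\ $\eta$ for $\alpha_t=t$), not $0$, and only the second coefficient vanishes there.
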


\begin{proof}
    Substituting $\lambda_t=\eta\alpha_t^2$ and the definition \eqref{eq:kappa} gives \eqref{eq:bounded-calibration-coefficients}. Since $\alpha,\beta\in C^1([0,1])$, the expressions on the right-hand sides extend continuously to $[0,1]$ and are bounded.
\end{proof}

\paragraph{Preserving fixed-point uniqueness.}
Theorem~\ref{thm:generalized-optimality} requires the integral condition \eqref{eq:generalized-uniqueness-condition}. The following proposition gives a sufficient condition.

\begin{restatable}{proposition}{scfGaussianCalibration}
\label{prop:gaussian-calibration}
Adopt the assumptions of Theorem~\ref{thm:velocity-optimality} on $v$ and $\pi$. Let $\lambda_t>0$ and $\zeta_t\ge0$ be measurable, finite schedules on $(0,1)$. If there exists a constant $C\in(0,\infty)$ such that $\zeta_t\le C\lambda_t$ for all sufficiently small $t>0$, then \eqref{eq:generalized-uniqueness-condition} holds. Consequently, $v=\iter_{\lambda,\zeta}v$ holds on $(0,1)\times\R^d$ if and only if $\rho^v=\pi$ and $v=v^\pi$.
\end{restatable}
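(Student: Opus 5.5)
The plan is to verify the integral condition \eqref{eq:generalized-uniqueness-condition} directly and then invoke Theorem~\ref{thm:generalized-optimality}, whose hypotheses on $v$ and $\pi$ are exactly those adopted here.

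Let $t_0$ be as in \eqref{eq:source-growth}, and choose $t_2\in(0,t_0)$ such that $\zeta_t\le C\lambda_t$ for all $t\in(0,t_2)$. On this interval, since $\lambda_t>0$ and $\zeta_t\ge0$,
\[
    \frac{\lambda_t}{\lambda_t+\zeta_t}\ge\frac{\lambda_t}{\lambda_t+C\lambda_t}=\frac{1}{1+C}>0.
\]
The integrand of \eqref{eq:generalized-uniqueness-condition} is nonnegative on all of $(0,t_0)$, because $\kappa_t>0$. Hence
\[
    \int_0^{t_0}\frac{\lambda_t}{\lambda_t+\zeta_t}\frac{\kappa_t}{\beta_t^2}\,\dd t
    \ge\frac{1}{1+C}\int_0^{t_2}\frac{\kappa_t}{\beta_t^2}\,\dd t .
\]

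It remains to show that the last integral diverges. From \eqref{eq:kappa},
\[
    \frac{\kappa_t}{\beta_t^2}=\frac{\dot\alpha_t}{\alpha_t}-\frac{\dot\beta_t}{\beta_t}=\frac{\dd}{\dd t}\log\frac{\alpha_t}{\beta_t}.
\]
Therefore, for every $s\in(0,t_2)$,
\[
    \int_s^{t_2}\frac{\kappa_t}{\beta_t^2}\,\dd t=\log\frac{\alpha_{t_2}}{\beta_{t_2}}-\log\frac{\alpha_s}{\beta_s}.
\]
As $s\downarrow0$, we have $\alpha_s/\beta_s\to0$ because $\alpha_0=0$ and $\beta_0=1$, so the right-hand side tends to $+\infty$. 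This is the same computation used in the proof of Theorem~\ref{thm:velocity-optimality}. By monotone convergence, \eqref{eq:generalized-uniqueness-condition} holds.

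Applying Theorem~\ref{thm:generalized-optimality} then yields the equivalence: $v=\iter_{\lambda,\zeta}v$ on $(0,1)\times\R^d$ if and only if $\rho^v=\pi$ and $v=v^\pi$.

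The only mild subtlety is measurability and nonnegativity of the integrand, so that the lower bound by a restricted integral is legitimate. Both follow from the stated assumptions on $\lambda_t$, $\zeta_t$, and the schedules.
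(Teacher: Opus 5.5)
Your proof is correct and follows the paper's argument essentially step for step. You bound $\lambda_t/(\lambda_t+\zeta_t)$ below by $1/(1+C)$ on an interval $(0,t_2)\subset(0,t_0)$, reduce to the divergence of $\int_0 \kappa_t/\beta_t^2\,\dd t$ via $\kappa_t/\beta_t^2=\frac{\dd}{\dd t}\log(\alpha_t/\beta_t)$ and $\alpha_0/\beta_0=0$, and then invoke Theorem~\ref{thm:generalized-optimality}; your explicit treatment of nonnegativity and of the limit $s\downarrow0$ fills in details the paper leaves implicit.
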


\begin{proof}
    For notational simplicity, denote $\bar{\lambda}_t:=\frac{\lambda_t}{\lambda_t+\zeta_t}$. Suppose $\zeta_t\le C\lambda_t$ for some finite $C\in(0,\infty)$ and every $t\in(0,t_1)$, where $t_1\in(0,t_0)$. Then, $\bar{\lambda}_t\geq\frac{1}{1+C}>0$ for $t\in(0,t_1)$. Therefore,
    $$
    \int_0^{t_0}\bar{\lambda}_t\frac{\kappa_t}{\beta_t^2}\,\dd t\geq \int_0^{t_1}\bar{\lambda}_t\frac{\kappa_t}{\beta_t^2}\,\dd t\geq \frac{1}{1+C}\int_0^{t_1}\frac{\kappa_t}{\beta_t^2}\,\dd t=\infty,
    $$
    which gives \eqref{eq:generalized-uniqueness-condition}. Hence, under the assumptions of Theorem~\ref{thm:generalized-optimality}, $v=\iter_{\lambda,\zeta}v$ holds on $(0,1)\times\R^d$ if and only if $\rho^v=\pi$ and $v=v^\pi$.
\end{proof}

We impose $\zeta_t\le C\lambda_t$, with the following choice:
\begin{equation}
    \lambda_t=\eta\alpha_t^2,\qquad
    \zeta_t=\eta\min\{C\alpha_t^2,\beta_t^2\},
    \qquad \eta>0.
    \label{eq:scf-coefficients}
\end{equation}
Proposition~\ref{prop:gaussian-calibration} ensures the uniqueness guarantee of the velocity optimality condition.
    
\subsection{Bounded Domains}
\label{app:bounded-action-proof}

\restateproposition{\scfBoundedActions}

\begin{proof}
The coordinatewise map $x\mapsto\tanh x$ is a continuously differentiable bijection from $\R^d$ onto $(-1,1)^d$, with inverse $\operatorname{arctanh}$. Its Jacobian is diagonal with positive entries $1-\tanh^2x_i$. The change-of-variables formula therefore gives \eqref{eq:latent-density}, including the normalizing constant. Taking logarithms,
\begin{equation*}
    \log\widetilde{\pi}(x)=\tau Q(\tanh x)+\sum_{i=1}^d\log(1-\tanh^2x_i)
    -\log\int_{(-1,1)^d}\exp(\tau Q(a))\,\dd a.
\end{equation*}
The final term is independent of $x$. The gradient of the first term is
\begin{equation*}
    \nabla_x[\tau Q(\tanh x)]
    =\tau\operatorname{diag}(1-\tanh^2x)\where{\nabla_aQ(a)}{a=\tanh x}.
\end{equation*}
For each coordinate, differentiating the log-Jacobian term gives
\begin{equation*}
    \frac{\dd}{\dd x_i}\log(1-\tanh^2x_i)
    =\frac{-2\tanh x_i(1-\tanh^2x_i)}{1-\tanh^2x_i}=-2\tanh x_i.
\end{equation*}
Adding these gradients proves \eqref{eq:latent-score}. In particular, the Jacobian correction is not multiplied by the inverse temperature.

Finally, if $Y_1$ has density $\widetilde{\pi}$, applying the change-of-variables formula to $\tanh Y_1$ cancels the Jacobian factor in \eqref{eq:latent-density}, yielding density $\pi$. Conversely, if $\tanh Y_1$ has density $\pi$, applying $\operatorname{arctanh}$ uniquely determines its preimage density to be $\widetilde{\pi}$. This proves both directions of the distributional equivalence. The same argument applies at each fixed state when $Q(a)$ is replaced by $Q(s,a)$.
\end{proof}

\section{Experiment Details and Results}
\label{app:experiment-details}

\subsection{Toy Example}
\label{app:toy-example}

\paragraph{Target density.}
The target is $\pi(x)\propto\exp(\tau Q(x))$ on $\R^2$ with $\tau=1$ and
\begin{equation*}
    Q(x)=\log\left[\frac18\sum_{k=0}^{7}\exp\left(-\frac12\norm[2]{x-\mu_k}^2\right)\right],
    \qquad
    \mu_k=\begin{pmatrix}8\cos(2\pi k/8)\\8\sin(2\pi k/8)\end{pmatrix},
\end{equation*}
an equally weighted mixture of eight unit-variance Gaussians whose means lie on a circle.

\paragraph{Settings.}
All five methods are integrated into one PyTorch codebase and share the network, optimizer, batch size, training budget, and evaluation protocol to ensure a fair comparison. During training, every algorithm has access only to the target score $\tau\nabla Q(x)$; ground-truth samples are used for evaluation only. All methods are given the same budget of network updates and target-score queries. We use a time-conditioned multilayer perceptron with $4$ hidden layers of $128$ units, Adam with learning rate $5\times10^{-4}$, and batch size $512$. AS, ASBS, and BMS are implemented following their open-source implementations, and FS following its paper. SCF uses $\alpha_t=t$, $\beta_t=1-t$, and the coefficients \eqref{eq:scf-coefficients} with $\eta=1$ and $C=10^4$. For ASBS, the update budget is split equally between its corrector matching and adjoint matching.
For sampling, SCF uses $50$ NFEs, while the baselines use $100$ NFEs.
All runs are conducted on a desktop with an NVIDIA GeForce RTX 5090 GPU and an Intel Core Ultra 9 285K CPU.

\begin{algorithm}[t]
\caption{Online Reinforcement Learning with Score-Calibrated Flow}
\label{alg:scf}
\DontPrintSemicolon
\KwIn{network parameters $\theta,\omega_1,\omega_2$;
    target network parameters
    $\bar{\omega}_1\leftarrow\omega_1,\bar{\omega}_2\leftarrow\omega_2$;
    inverse temperature $\tau$;
    learning rates $\ell_{\text{actor}},\ell_{\text{critic}}$;
    target smoothing coefficient $\nu$;
    replay buffer $\mathcal D\leftarrow\varnothing$.}
\For{each iteration}{
    \For{each environment step}{
        Sample action $a\sim\pi^\theta(\cdot|s)$
        and execute it in the environment;\;
        Observe next state $s'$ and reward $r$;\;
        Store transition
        $\mathcal D\leftarrow\mathcal D\cup\{(s,a,r,s')\}$.\;
    }
    \For{each gradient step}{
        Sample a mini-batch from $\mathcal D$;\;
        Update critics
        $\omega_i\leftarrow\omega_i-\ell_{\text{critic}}\nabla_{\omega_i}\calL_Q(\omega_i)$
        for $i\in\{1,2\}$;\;
        Update actor
        $\theta\leftarrow\theta-\ell_{\text{actor}}\nabla_\theta
        \calL_{\lambda,\zeta}^{\mathrm{actor}}(\theta)$;\;
        Update target networks
        $\bar{\omega}_i\leftarrow\nu\omega_i+(1-\nu)\bar{\omega}_i$
        for $i\in\{1,2\}$.\;
    }
}
\end{algorithm}

\begin{table}[t]
    \caption{Shared settings of the RL experiments.}
    \label{tab:rl-shared}
    \centering
    \small
    \begin{tabular}{lcc}
        \toprule
        & DeepMind Control Suite & HumanoidBench \\
        \midrule
        Parallel environments & $1$ & $50$ \\
        Environment steps per training step & $1$ & $50$ \\
        Replay buffer size & $2.5\times10^5$ & $1\times10^6$ \\
        Batch size & $256$ & $256$ \\
        Hidden layers & $2\times256$ & $3\times256$ \\
        Normalization & none & batch renormalization \\
        Optimizer & Adam & Adam \\
        Actor learning rate & $3\times10^{-4}$ & $3\times10^{-4}$ \\
        Critic learning rate & $1\times10^{-3}$ & $3\times10^{-4}$ \\
        Discount factor & $0.99$ & $0.99$ \\
        Target smoothing coefficient & $0.005$ & $0.005$ \\
        \bottomrule
    \end{tabular}
\end{table}

\paragraph{Metrics.}
We evaluate $20$ checkpoints per run, evenly spaced in model updates, each with $5\times10^4$ generated samples. The checkpoint with the smallest $W_2$ is selected and re-evaluated on fresh generated and reference samples, and we report the mean and standard deviation of the resulting values over five seeds.
\begin{itemize}
    \item \textbf{Mode total variation distance (mode TVD).} Each generated sample is assigned to the component with the nearest mean, and $\widehat w_k$ denotes the fraction of samples assigned to component $k$. The mode TVD $\frac12\sum_{k=0}^{7}\abs{\widehat w_k-\tfrac18}$ measures how evenly the eight modes are covered.
    \item \textbf{$2$-Wasserstein distance ($W_2$).} Computed exactly under the squared Euclidean cost between $2000$ generated samples and $2000$ target samples; we repeat the computation $20$ times with nonoverlapping subsets of generated samples and fresh target samples, and report the median. As a finite-sample estimate it does not vanish even for perfect samples: between two independent sets of target samples it equals $0.719$, which serves as the reference floor.
\end{itemize}

\paragraph{Results.}

The results are summarized in Figure~\ref{fig:toy}. Black crosses mark the component means, the circles mark one and two standard deviations around each mean, and samples are colored by their nearest mean. Each panel reports the mode TVD, the $W_2$ distance, and the wall-clock training time as mean $\pm$ standard deviation over five seeds (lower is better). SCF achieves the lowest discrepancy on both mode TVD and $W_2$ while being the most efficient in terms of training and inference.

\subsection{RL Tasks}
\label{app:rl-tasks}

\paragraph{Shared settings.}
For each set of RL tasks, all algorithms are integrated into one JAX codebase and share the training loop, replay buffer, networks, and optimizer, summarized in Table~\ref{tab:rl-shared}, to ensure a fair comparison. On the DeepMind Control Suite we build on the codebase of RFM~\citep{li2026reverse} (\url{https://github.com/azizanlab/ReverseFlowMatching}); on HumanoidBench we build on the codebase of QFlex~\citep{wei2026scalable} (\url{https://github.com/LNSGroup/Qflex}). Each algorithm is implemented following its open-source implementation. All training and evaluation runs are conducted on a desktop with an NVIDIA GeForce RTX 5090 GPU and an Intel Core Ultra 9 285K CPU.

\paragraph{Method-specific settings.}
SCF uses the latent parameterization of Section~\ref{sec:bounded-domains} with $\alpha_t=t$, $\beta_t=1-t$, and the coefficients \eqref{eq:scf-coefficients} with $\eta=1$ and $C=10^4$. The overall procedure is summarized in Algorithm~\ref{alg:scf}.
For sampling, each baseline uses the NFEs of its original implementation: $10$ for RFM, $16$ for DIME, and $20$ for the others; SCF uses $10$. For evaluation, each baseline also follows its own choice of best-of-$n$ action selection, in which $n$ candidate actions are sampled and the one with the highest critic value is executed: RFM, QVPO, MaxEntDP, SDAC, and DIME use $n=32$, whereas DQS, QSM, and QFlex use $n=1$. SCF uses $n=32$ on the DeepMind Control Suite and $n=1$ on HumanoidBench.

\paragraph{Results.}

The training curves are shown in Figures~\ref{fig:hb} and~\ref{fig:dmc}. The wall-clock time per training iteration for each method is summarized in Table~\ref{tab:update-time}. The results demonstrate that SCF consistently ranks among the top-performing methods across all tasks, while also being the most efficient in the evaluated generative online RL algorithms.

\begin{figure}[t]
    \centering
    \includegraphics[width=\textwidth]{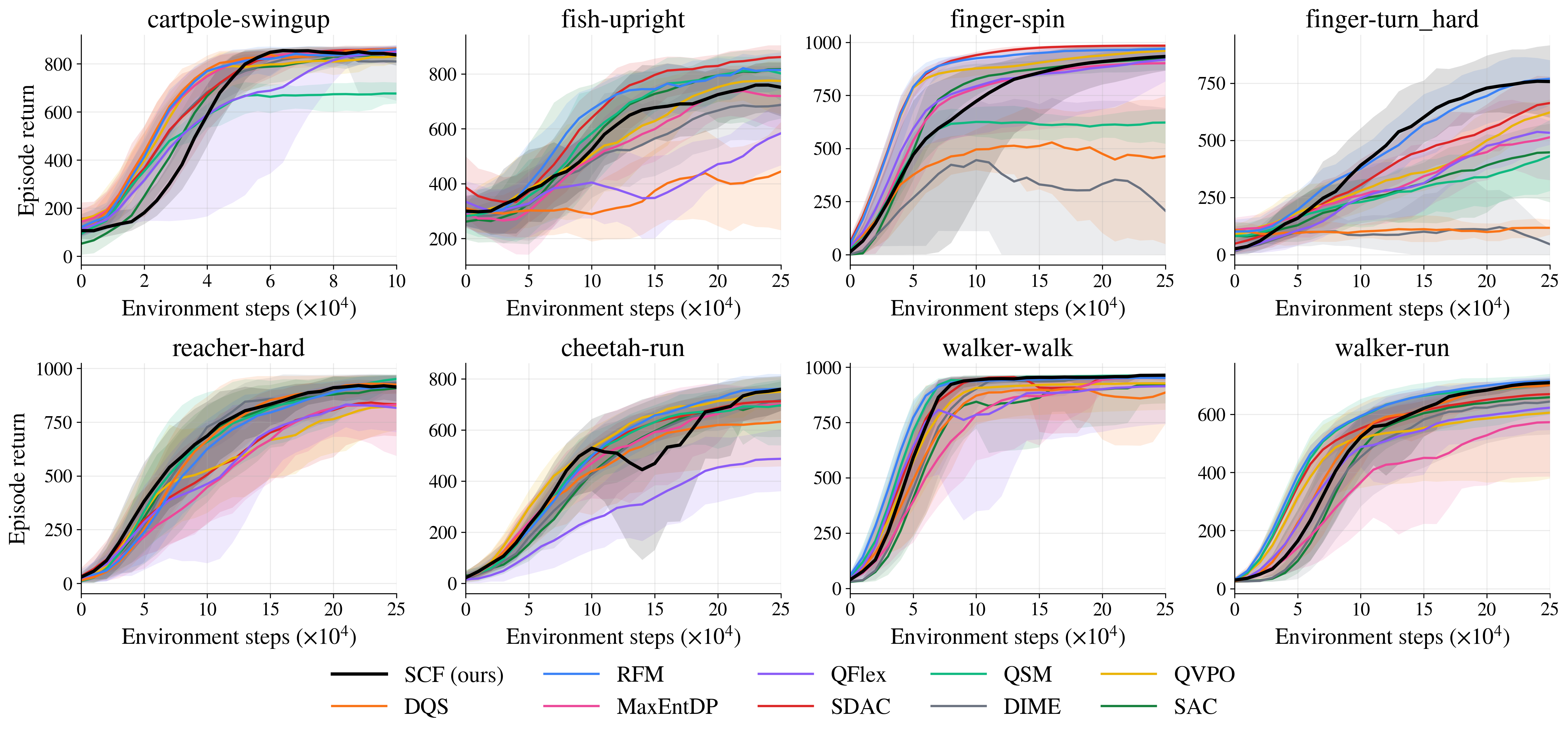}
    \caption{Training curves on eight tasks from the DeepMind Control Suite. The solid lines correspond to the mean across five seeds, and the shaded regions represent minimum-to-maximum range.}
    \label{fig:dmc}
\end{figure}

\begin{table}[t]
    \caption{Milliseconds per training iteration on walker-run (DeepMind Control Suite) and h1-run (HumanoidBench), measured on one NVIDIA GeForce RTX 5090 with batch size $256$ and averaged over $1000$ steps.}
    \label{tab:update-time}
    \centering
    \footnotesize
    \setlength{\tabcolsep}{4pt}
    \begin{tabular}{lcccccccccc}
        \toprule
        Task & SAC & SCF (ours) & QSM & RFM & QFlex & SDAC & DIME & QVPO & MaxEntDP & DQS \\
        \midrule
        walker-run & 0.47 & 0.87 & 1.08 & 2.67 & 1.94 & 2.43 & 3.24 & 4.40 & 6.11 & 7.51 \\
        h1-run & 1.02 & 1.12 & 1.21 & 2.08 & 2.99 & 3.15 & 3.33 & 4.58 & 6.28 & 11.66 \\
        \bottomrule
    \end{tabular}
\end{table}

\begin{figure}[t]
    \centering
    \includegraphics[width=0.4\textwidth]{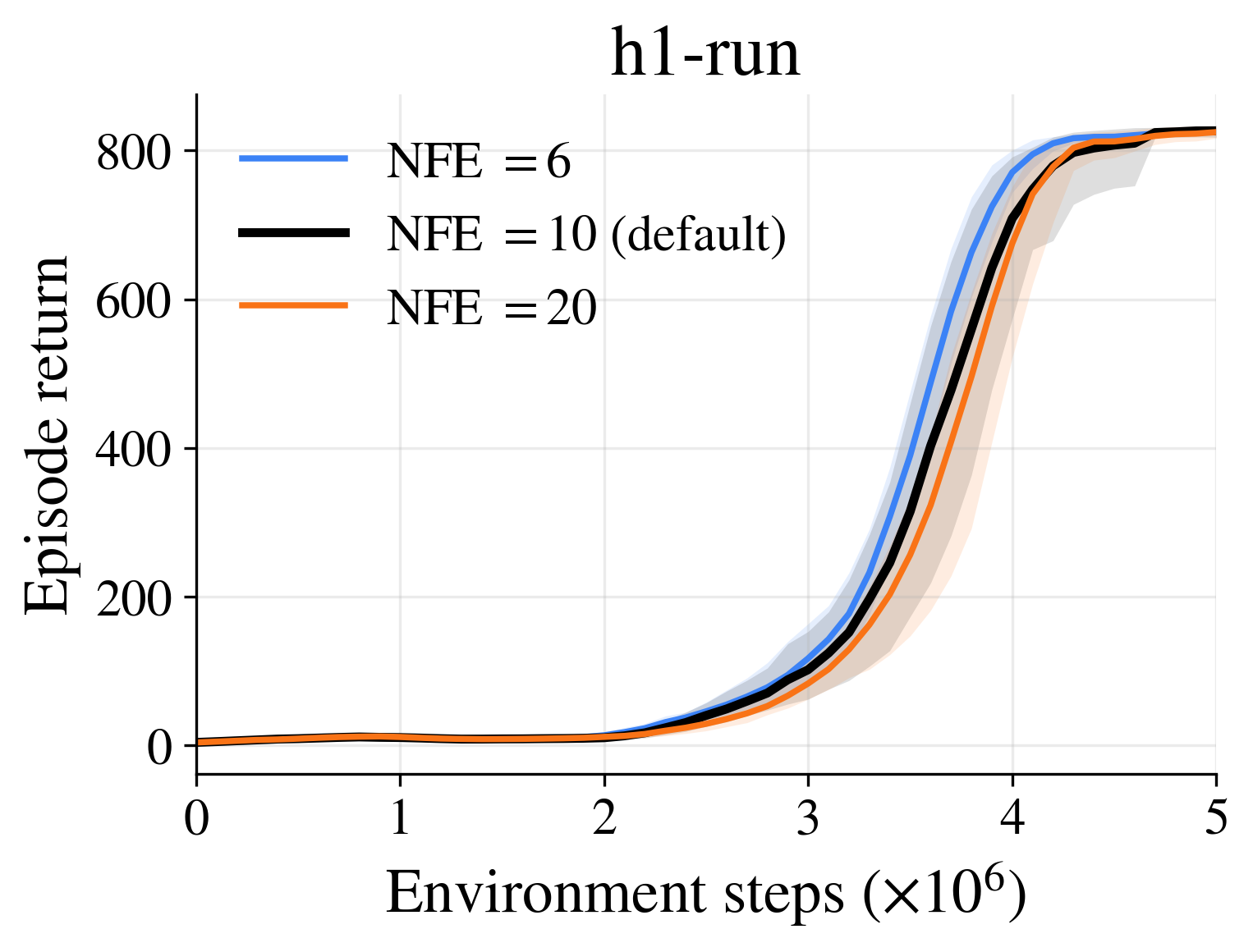}
    \qquad
    \includegraphics[width=0.4\textwidth]{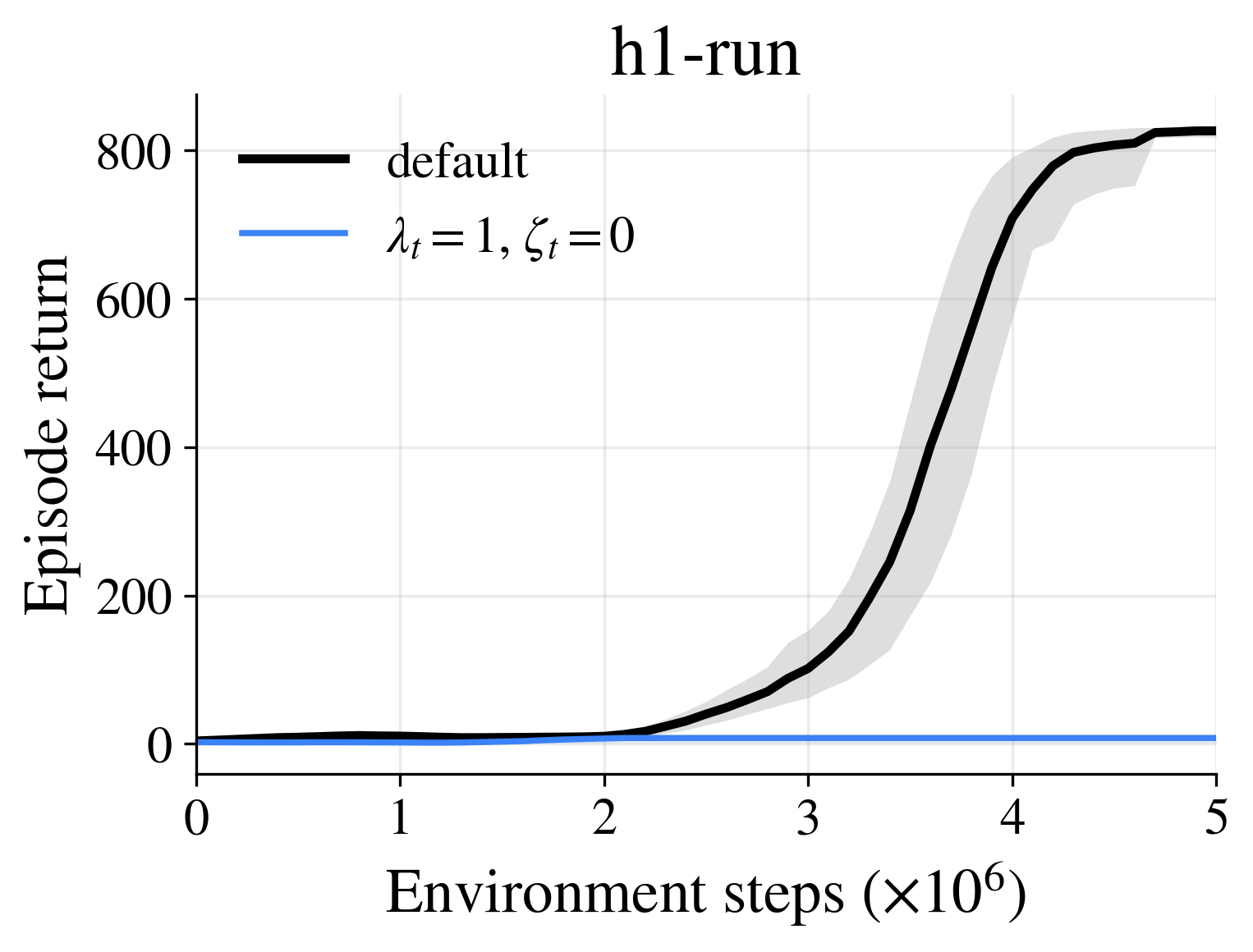}
    \caption{Training curves of SCF on the h1-run task under different design choices. Left: experiments on NFEs for sampling. Right: experiments on the coefficients of the regression target. The solid lines correspond to the mean across five seeds, and the shaded regions represent minimum-to-maximum range.}
    \label{fig:ablation}
\end{figure}

\section{Ablation Studies}
\label{app:ablation}

In this section, we analyze how different design choices affect the performance of SCF. We conduct the experiments on the h1-run task from HumanoidBench. All variants follow the settings of Appendix~\ref{app:rl-tasks} and are trained with five seeds; the training curves are shown in Figure~\ref{fig:ablation}.

First, we vary the NFEs used to sample actions during both training and evaluation. SCF uses $10$ NFEs by default; we compare with $6$ and $20$ NFEs. As shown in Figure~\ref{fig:ablation} (left), the three settings perform similarly, indicating that the algorithm is robust to the choice of NFEs.

Second, we study the coefficient choices in the regression target \eqref{eq:scf-target}. Our default choice \eqref{eq:scf-coefficients} combines the conditional velocity, current velocity, and score-based term, with $\lambda_t=\eta\alpha_t^2$ canceling the inverse powers of $\alpha_t$ in the score-based contribution (Proposition~\ref{prop:bounded-calibration}). As an ablation, we set $\lambda_t=1$ and $\zeta_t=0$, leaving only the score-based term. Since this term is unbounded as $t\to0$, we sample $t$ uniformly from $[0.05,1]$ to avoid the singularity. Nevertheless, this variant fails to learn: for every seed, the regression loss exceeds $10^{24}$ within the first $2\times10^6$ environment steps, and the return never improves (Figure~\ref{fig:ablation}, right). These results confirm that the singularity cancellation developed in Appendix~\ref{app:coefficient-proofs} is both theoretically well motivated and practically essential for stable training.

\end{document}